\DeclareSymbolFont{letters}{OML}{cmm}{m}{it}
\DeclareMathSymbol{0}{\mathalpha}{letters}{`0}
\DeclareMathSymbol{1}{\mathalpha}{letters}{`1}
\DeclareMathSymbol{2}{\mathalpha}{letters}{`2}
\DeclareMathSymbol{3}{\mathalpha}{letters}{`3}
\DeclareMathSymbol{4}{\mathalpha}{letters}{`4}
\DeclareMathSymbol{5}{\mathalpha}{letters}{`5}
\DeclareMathSymbol{6}{\mathalpha}{letters}{`6}
\DeclareMathSymbol{7}{\mathalpha}{letters}{`7}
\DeclareMathSymbol{8}{\mathalpha}{letters}{`8}
\DeclareMathSymbol{9}{\mathalpha}{letters}{`9}
\renewcommand{\mathbf}{\bm} 
\setlist[enumerate]{label={\itshape\arabic*.}} 
\theoremstyle{definition}
\crefname{equation}{equation}{equations}
\Crefname{equation}{Equation}{Equations}
\Crefname{figure}{Figure}{Figures}
\Crefname{chapter}{Appendix}{Appendices}
\newtheorem{theorem}{Theorem}[section]
\Crefname{theorem}{Theorem}{Theorems}
\newaliascnt{proposition}{theorem}
\newtheorem{proposition}[proposition]{Proposition}
\Crefname{proposition}{Proposition}{Propositions}
\newaliascnt{definition}{theorem}
\newtheorem{definition}[definition]{Definition}
\Crefname{definition}{Definition}{Definitions}
\newaliascnt{remark}{theorem}
\newtheorem{remark}[remark]{Remark}
\Crefname{remark}{Remark}{Remarks}
\newaliascnt{corollary}{theorem}
\newtheorem{corollary}[corollary]{Corollary}
\Crefname{corollary}{Corollary}{Corollaries}
\newaliascnt{lemma}{theorem}
\Crefname{lemma}{Lemma}{Lemmas}
\newaliascnt{result}{theorem}
\Crefname{result}{Result}{Results}
\newaliascnt{example}{theorem}
\crefname{example}{Example}{Examples}
\newaliascnt{claim}{theorem}
\crefname{claim}{Claim}{Claims}
\newaliascnt{note}{theorem}
\crefname{note}{Note}{Notes}
\pgfplotsset{compat=1.18}
\tikzstyle{startstop} = [rectangle, rounded corners, minimum width=3.5cm, minimum height=1cm,text centered, draw=black, fill=blue!10]
\tikzstyle{process} = [rectangle, minimum width=3.5cm, minimum height=1cm, text centered, draw=black, fill=orange!20]
\tikzstyle{arrow} = [thick,->,>=stealth]
\renewcommand{\le}{\leqslant}
\renewcommand{\ge}{\geqslant}
\DeclareMathOperator{\B}{\mathbf{B}}
\newcommand{\E}{\mathbb E}
\newcommand{\R}{\mathbb R}
\renewcommand{\S}{\mathbb S}
\newcommand{\K}{ K}
\renewcommand{\k}{ k}
\newcommand{\F}{{F}} 
\renewcommand{\Re}{{\mathbb R}}
\newcommand{\bPsi}{\mathbf \Psi}
\newcommand{\bLambda}{\mathbf \Lambda}
\newcommand{\bpsi}{\bm \psi}
\DeclareMathOperator{\minimize}{minimize}
\DeclareMathOperator{\maximize}{maximize}
\DeclareMathOperator{\tr}{tr}
\numberwithin{equation}{section}
\title{\Large{K-Tensors: Clustering Symmetric Positive Semi-Definite Matrices}}
\date{}
\newif\ifuniqueAffiliation
\newbox{\orcid}\sbox{\orcid}{\includegraphics[scale=0.06]{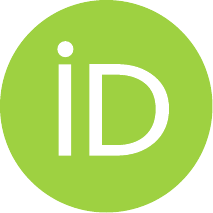}}
	\author{
		\hspace{1mm}Hanchao Zhang{\thanks {Corresponding author contact: \texttt{hanchao.zhang@nyu.edu}}} \\
		Division of Biostatistics\\
		Department of Population Health\\
		Grossman School of Medicine\\
		New York University\\
		 \\
		\And
		\hspace{1mm}Xiaomeng Ju \\
		Division of Biostatistics\\
		Department of Population Health\\
		Grossman School of Medicine\\
		New York University\\
		\And
		\hspace{1mm}Baoyi Shi\\
		Department of Biostatistics\\
		Columbia University\\
		\And
		\hspace{1mm}Lingsong Meng \\
		Division of Biostatistics\\
		Department of Population Health\\
		Weill Cornell Medicine\\
		Cornell University\\
		\And
		\hspace{1mm}Thaddeus Tarpey \\
		Division of Biostatistics\\
		Department of Population Health\\
		Grossman School of Medicine\\
		New York University\\
	}
\begin{document}
\maketitle

\begin{abstract}

This paper presents a new clustering algorithm for symmetric positive semi-definite (SPSD) matrices, called K-Tensors. The method identifies structured subsets of the SPSD cone characterized by common principal component (CPC) representations, where each subset corresponds to matrices sharing a common eigenstructure.  Unlike conventional clustering approaches that rely on vectorization  or transformations of SPSD matrices, thereby losing critical geometric and spectral information, K-Tensors introduces a divergence that respects the intrinsic geometry of SPSD matrices.  This divergence preserves the shape and eigenstructure information  and yields principal SPSD tensors, defined as a set of representative matrices that summarize the distribution of SPSD matrices. By exploring its theoretical properties, we show that the proposed clustering algorithm is self-consistent under mild distribution assumptions and converges to a local optimum. We demonstrate the use of the algorithm through an application to resting-state functional magnetic resonance imaging (rs-fMRI) data from the Human Connectome Project, where we cluster brain connectivity matrices to discover groups of subjects with shared connectivity structures.

\end{abstract}

\keywords{Clustering \and Symmetric Positive Semi-Definite Matrices \and Common Principal Component}

\section{Introduction}
\label{intro}


The primary focus of this paper is to introduce a novel \textit{self-consistency algorithm} \citep{tarpey1999self} designed to cluster symmetric positive semi-definite (SPSD) matrices while preserving their {SPSD geometry}. This requires summarizing the distribution of the data while retaining the maximum amount of information, a central objective in statistics. Various statistical techniques have been developed for this purpose and are often prefixed with ``principal'', such as principal points, principal curves, and principal components. These techniques are frequently built upon the foundational concept of self-consistency.

The notion of self-consistency was first introduced by \citet{hastie1989principal} in Euclidean spaces as a smooth, nonlinear generalization of principal component axes, also referred to as principal curves. Consider \(\mathbf X \in \Re^p\) as a random vector with density \(h\) and finite second moments. Let \({f}\) denote a smooth, \(C^\infty\), unit-speed curve in \(\Re^p\). The projection index \(\lambda_{\mathbf{f}}: \Re^p \to \Re\) is defined as:
\[
	\lambda_{{f}}(\mathbf{x}) = \sup_\lambda  \left\{ \lambda \in \mathbb R: \Vert \mathbf{x} - {f}(\lambda) \Vert_2 = \inf_\mu \Vert \mathbf{x} - {f}(\mu) \Vert_2 \right\}.
\]
The projection index \(\lambda_{{f}}(\mathbf{x})\) represents the value of \(\lambda\) at which \({f}(\lambda)\) is closest to \(\mathbf{x}\). The curve \({f}\) is deemed self-consistent, or a principal curve of \(h\), if
\begin{align}\label{eq:principal_curve}
	\mathbb E(\mathbf X \mid \lambda_{{f}}(\mathbf X) = \lambda) = {f}(\lambda), \quad \text{for } a.s. \ \lambda.
\end{align}
\cite{flury1990principal}, \cite{tarpey1995principal}, and \citet{flury1996self} introduced the concept of an optimal \( K \)-point approximation to a distribution, referred to as \emph{principal points} or \emph{self-consistent points}. These points generalize the notion of the mean from a single representative point to multiple points that optimally summarize the distribution.

Let \( \bm{X} \in \mathbb{R}^p \) denote a random vector, and let \( \bm{Y} \in \mathbb{R}^p \) be a random vector, {measurable with respect to $\bm{X}$}, intended to approximate or summarize \( \bm{X} \). Then \( \bm{Y} \) is said to be \emph{self-consistent} for \( \bm{X} \) if
\begin{align}\label{eq:principal_points}
	\mathbb{E}[\bm{X} \mid \bm{Y}] = \bm{Y} \quad \text{a.s.}.
\end{align}
This property implies that \( \bm{Y} \) retains predictive information in \( \bm{X} \), in the sense that the conditional expectation of \( \bm{X} \) given \( \bm{Y} \) yields no improvement over \( \bm{Y} \) itself. In other words, \( \bm{Y} \) is a suitable mean-squared approximation of \( \bm{X} \) using only information contained in \( \bm{Y} \), making it an optimal summary under the squared error loss. \Cref{eq:principal_curve}, and \cref{eq:principal_points} are two layers of the definition of self-consistency. Whereas the self-consistency condition for principal points in \cref{eq:principal_points} involves taking the conditional expectation with respect to the random variable \(\bm{Y}\), the self-consistency condition for principal curves in \cref{eq:principal_curve} evaluates the conditional expectation at each point along the curve.

Although many statistical techniques for summarizing distributions were built upon the property of self-consistency, this property is not always preserved in the final estimation algorithm due to computational constraints or the risk of overfitting. For instance, \citet{meng2021principal} introduced principal manifolds through model complexity selection, demonstrating that enforcing self-consistency can lead to overfitting. In contrast, their model complexity selection procedure---despite relaxing the self-consistency condition---produced a smoother manifold that effectively mitigates overfitting.

Previously, the concept of self-consistency introduced above was formulated primarily for distributions defined on Euclidean spaces. However, as statistical science has evolved, the space containing data observations has broadened from Euclidean spaces to more general spaces, such as functional spaces. In this paper, the focus is on observations that reside in the space of SPSD matrices, often generated as covariance matrices from functional data. SPSD matrices serve as powerful data descriptors with applications across various domains, including computer vision \citep{tuzel2006region, alavi2014random, cherian2016positive}, brain imaging \citep{fmri, cherian2016positive}, and other machine learning fields \citep{shinohara2010covariance}. For example, in computer vision, an image patch can be represented by the covariance matrix of low-level features such as color and intensity gradients \citep{cherian2016positive}. Similarly, functional connectivity matrices derived from functional magnetic resonance imaging (fMRI) data quantify the temporal covariance between blood-oxygen-level-dependent (BOLD) signal time series across different brain regions \citep{fmri}. The BOLD signal reflects changes in blood oxygenation that occur in response to neural activity, serving as an indirect measure of brain function. These connectivity matrices enable the investigation of cognitive processes, neurological disorders, and developmental trajectories in the human brain. However, as with many complex data structures, the intricacy of random SPSD matrix-valued distributions presents challenges in preserving useful information while summarizing the distribution. Effectively capturing, summarizing, and preserving the information of SPSD matrices is crucial for developing clustering algorithms specifically tailored to the SPSD matrices.

An important feature of SPSD matrices is their eigenstructure, which can encapsulate key shape information. \citet{clark2022robust} demonstrated that eigenvector alignment captures the strength of bilateral connectivity in cortical areas of healthy control subjects, while also revealing degradation of this commissural system in individuals with Alzheimer's disease. Similarly, \citet{grigis2012longitudinal} showed that tests based on eigenvectors are effective in detecting the progression of brain lesions in patients with Neuromyelitis Optica and Multiple Sclerosis. Furthermore, \citet{schwartzman2010group} found that group-level differences in brain structure are more sensitively detected through differences in eigenvectors of diffusion tensors than through differences in eigenvalues or their combinations, underscoring the critical role of eigenstructure in symmetric positive definite data.

Given the central role that eigenstructure plays in these applications, the following discussion introduces the eigen-decomposition of SPSD matrices, along with several commonly used distance and divergence metrics for comparing such matrices, highlighting their relationship to the underlying eigenstructure. Let \(\bm \psi_i\) denote the covariance matrix for observation \(i=1,\dots n\). By performing principal component analysis (PCA) on each covariance matrix, we can decompose \(\bm \psi_i\) as \(\bm \psi_i = \mathbf U_i \mathbf D_i \mathbf U_i^\intercal\), where \(\mathbf U_i\) is an orthogonal matrix of eigenvectors and \(\mathbf D_i\) is the diagonal matrix of eigenvalues. The matrices $\bm{U}_i$ and $\bm{D}_i$ provide insight into the ``shape'' of the distribution for the observations, which can be visualized as ellipsoids whose axes are oriented according to the eigenvectors, with the eccentricity along each axis determined by the relative magnitudes of the eigenvalues. In particular, an SPSD matrix $\bm{\psi} \in \mathbb{R}^{p \times p}$ defines an ellipsoid via its quadratic form: for any $\bm{x} \in \mathbb{R}^p$, the level set $\left\{ \bm{x}\in \mathbb{R}^p : \bm{x}^\intercal \bm{\psi} \bm{x} = \mbox{constant} \right\}$ corresponds to an ellipsoid (possibly degenerate). Given the eigen-decomposition $\bm{\psi} = \bm{U} \bm{D} \bm{U}^\intercal$, where $\bm{D} = \mathrm{diag}(d_1, \ldots, d_p)$ and $\bm{U} = [\bm{u}_1, \ldots, \bm{u}_p]$, the quadratic form becomes $\bm{x}^\intercal \bm{\psi} \bm{x} = \sum_{j=1}^p d_j (\bm{u}_j^\intercal \bm{x})^2$, revealing that the ellipsoid is aligned with the eigenvectors $\bm{u}_j$ and scaled along each direction by $\sqrt{d_j}$. However, most clustering algorithms for SPSD matrices typically employ transformations that compromise this inherent eigenstructure. A common approach, particularly in the applied research domain, is to vectorize SPSD matrices and apply the conventional $K$-means algorithm directly on the vectorized SPSD matrices \citep{kauppi2010clustering, shakil2014cluster, huang2020statistical, fang2025individualized}. This approach typically relies on the Frobenius norm:
\[
	d_{\operatorname{E}}(\bm{\psi}_1, \bm{\psi}_2) = \Vert \bm{\psi}_1 - \bm{\psi}_2 \Vert_F^2 = \Vert \mathrm{vec}(\bm{\psi}_1) - \mathrm{vec}(\bm{\psi}_2) \Vert^2_2,
\]
which flattens each matrix into a vector and treats all entries independently. As a result, this method fails to preserve the eigenstructure information encoded in the eigenvectors and the scale information captured by the eigenvalues.

To better capture the geometry of SPSD matrices, \citet{stanitsas2017clustering} introduced the \(\alpha\beta\)-log-determinant divergence, a divergence family that includes both the log-determinant divergence and the affine-invariant Riemannian (AIR) distance as special cases. The log-determinant divergence \citep{springs2005matrix}, defined as
\[
	d_{\text{logdet}}(\bm{\psi}_1, \bm{\psi}_2) = \mathrm{tr}(\bm{\psi}_1 \bm{\psi}_2^{-1} - \mathbf{I}) - \log\det(\bm{\psi}_1 \bm{\psi}_2^{-1}),
\]
is particularly suited for comparing matrices that follow a Wishart distribution. Importantly, it satisfies the self-consistency property, as shown in \citet{banerjee2005clustering}, wherein the representative point minimizes the expected divergence conditioned on itself. However, this divergence is only weakly sensitive to differences in eigenstructure and thus may be limited in capturing orientation-based discrepancies. The affine-invariant Riemannian distance \citep{doi:10.1137/S0895479803436937}, defined as
\[
	d_{\text{AIR}}(\bm{\psi}_1, \bm{\psi}_2) = \Vert \log(\bm{\psi}_1^{-\frac{1}{2}} \bm{\psi}_2 \bm{\psi}_1^{-\frac{1}{2}}) \Vert_F,
\]
offers theoretical guarantees and respects the underlying manifold structure of SPSD matrices. However, its high computational cost—due to repeated evaluations of matrix logarithms and square roots—makes it impractical in high-dimensional settings. Additionally, the AIR distance does not satisfy the self-consistency property, complicating the derivation of closed-form expressions for cluster centers.
Several alternative approaches based on different distance metrics have also been proposed. For instance, \citet{nasim2021geometric} utilized the log-Euclidean distance, which approximates manifold geometry with improved efficiency, to cluster SPSD matrices. \citet{fryer2020k} proposed the log-Cholesky distance and demonstrated that the corresponding Fréchet mean under this metric coincides with the Euclidean mean. As a result, their clustering procedure reduces to applying the $K$-means algorithm to the vectorized upper-triangular entries of the Cholesky decomposition. While these methods offer interesting alternatives, a detailed comparison with them is beyond the scope of this paper, which focuses on developing and analyzing a clustering method with a novel divergence that preserves the eigenstructure and satisfies the self-consistency property with some distributional assumption.

Given the importance of eigenstructure in clustering SPSD matrices, along with the computational and statistical benefits of self-consistency, we propose a clustering algorithm that captures eigenstructural information through a novel divergence metric. This metric is shown to satisfy a self-consistency property under probabilistic assumptions. To develop a clustering algorithm for SPSD matrices, \Cref{sec:principal} introduces several fundamental concepts: (\Cref{sec2.1}), common principal components (\Cref{sec2.2}), domains of attraction (\Cref{sec2.3}), and the notions of principal SPSD tensors and self-consistency for random SPSD matrices (\Cref{sec2.4}). These concepts form the theoretical basis for the clustering algorithm presented in \Cref{sec3}, which leverages the unique structure of SPSD matrices to produce meaningful clustering results.

\section{Principal and Self-Consistent SPSD Matrices}
\label{sec:principal}


This section introduces a novel divergence metric tailored to capture the eigenstructure of SPSD matrices. Building on this divergence, the concept of \emph{principal SPSD tensors} is proposed as a structured summary of random SPSD matrices. To characterize desirable summaries under this metric, the notion of \emph{self-consistency} within the SPSD cone is also introduced; the proof of self-consistency is presented in \Cref{thm:self-consistency_spsd_tensors}.

The following notations will be used in the subsequent sections: Let $\mathcal{V}_q(\Re^p) = \{\bm{\Phi} \in \Re^{p \times q}: \bm{\Phi}^\intercal \bm{\Phi} = \mathbf{I}_q\}$ denote the Stiefel manifold, which consists of all orthonormal \(q\)-frames in \(\Re^p\). An orthonormal \(q\)-frame in \(\Re^p\) refers to a collection of \(q\) orthonormal vectors in \(\Re^p\), represented as the columns of a matrix with orthonormal columns. Two additional sets are considered: $\mathbb{S}_+^p = \{ \bm{\Sigma} \in \Re^{p \times p} \mid \bm{\Sigma} = \bm{\Sigma}^\intercal,\ \bm{\Sigma} \succeq 0 \}$, the cone of symmetric positive semi-definite (SPSD) matrices in $\Re^{p \times p}$; and $\mathbb{D}_+^p = \{ \bm{D} \in \Re^{p \times p} \mid \bm{D} = \operatorname{diag}(\bm{a}),\ \bm{a} = (a_1,\dots, a_p)^\intercal  \in \Re^p,\ a_j\ge 0,\;  j=1\dots, p\}$,
the set of diagonal matrices with non-negative entries. Here, $\mathbf{I}_q$ denotes the $q \times q$ identity matrix, and \(\operatorname{diag}(\bm{a})\) forms a diagonal matrix from the vector \(\bm{a}\).

\subsection{Projection Indices and Projection of SPSD Matrices}\label{sec2.1}

\begin{definition}\label{sec3:def:projectionindex}
	For a given orthogonal basis matrix $\mathbf B \in \mathcal V_p(\R^p)$, the projection index of a given SPSD matrix $\bpsi \in \S_+^p$,
	denoted $\bLambda_{\mathbf B}(\bm \psi): \S_+^p \to \mathbb D_+^p$,
	is defined as:
	\begin{align}\label{sec2:eq:projectionlambda}
		\bLambda_{\mathbf B}(\bpsi) = \arg\min_{\Theta \in \mathbb D_+^p}
		\Vert \bpsi - \mathbf B \mathbf \Theta \mathbf B^\intercal \Vert_F^2.
	\end{align}
\end{definition}
The goal is to find the non-negative diagonal matrix that provides the closest approximation to $\bpsi$ in terms of the Frobenius norm when the orthogonal basis matrix \(\bm B\) is fixed, the obtained \(\bm\Lambda_{\bm B}(\bm\psi)\) is used to construct the projection of a SPSD matrix onto the orthonormal matrix $\mathbf B$ in \Cref{def:projection}.
The following notation \(\odot\) represents the Hadamard (element-wise) product and is used below.
\begin{definition}\label{def:projection}
	For a given $\bpsi$ in $\S_+^p$ and a given orthogonal basis matrix $\mathbf B \in \mathcal V_p(\Re^p)$, {we define} the projection of the matrix $\bpsi$ onto $\mathbf B$ as:
	\begin{align*}
		\mathcal P_{\mathbf B}(\bpsi) = \mathbf B  \bm\Lambda_{\mathbf B}(\bpsi)   \mathbf B^\intercal.
	\end{align*}
\end{definition}
This projection allows us to approximate $\bpsi$ using a projection that is determined by an orthonormal matrix $\mathbf B$. \Cref{fig:ellipse_demos} illustrates the projection of a SPSD matrix onto two distinct orthogonal matrices.

\begin{figure}[htp]
	\centering
	\begin{subfigure}[t]{0.49\linewidth}
		\centering
		\includegraphics[width=0.95\linewidth]{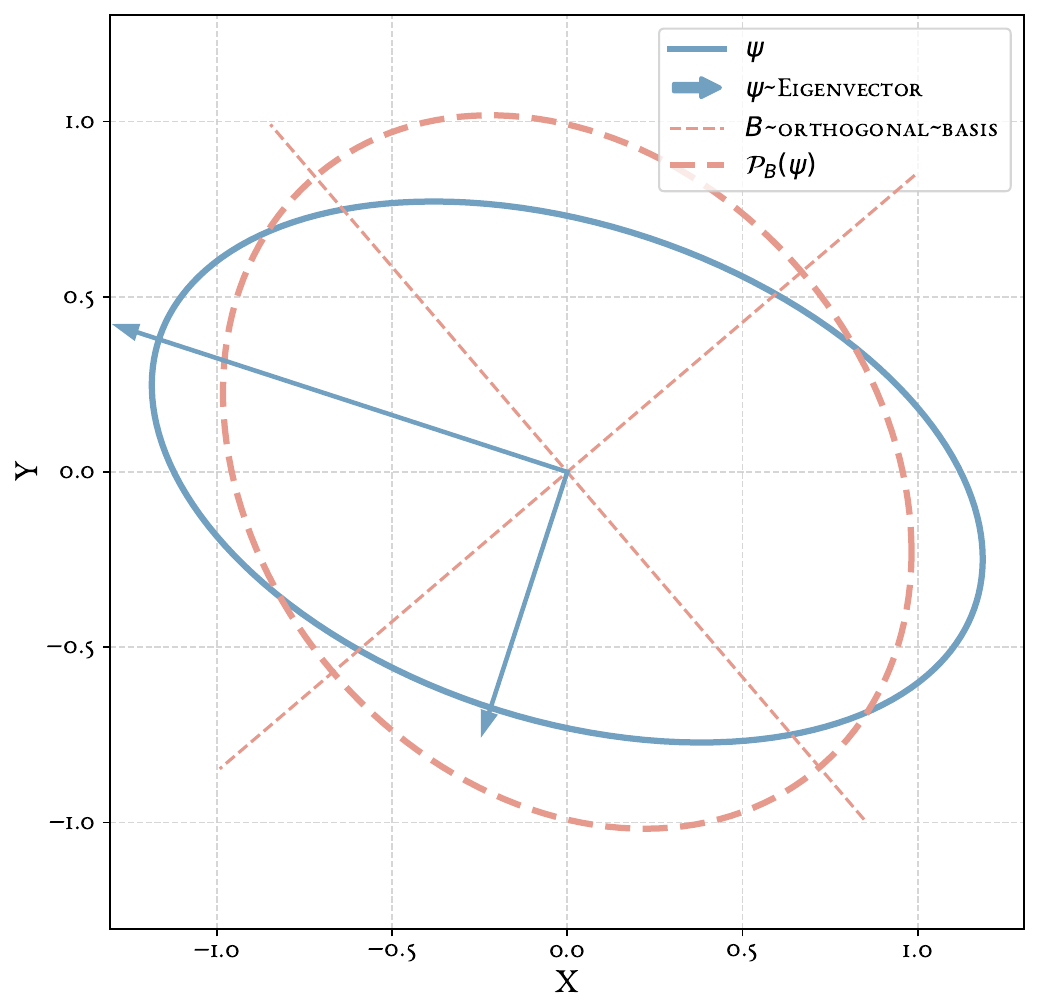}
		\caption*{\scriptsize \(\left\Vert \bm\psi - \mathcal P_{\bm B}(\bm\psi) \right\Vert_F^2 \approx 0.63\)}
	\end{subfigure}%
	\begin{subfigure}[t]{0.49\linewidth}
		\centering
		\includegraphics[width=0.95\linewidth]{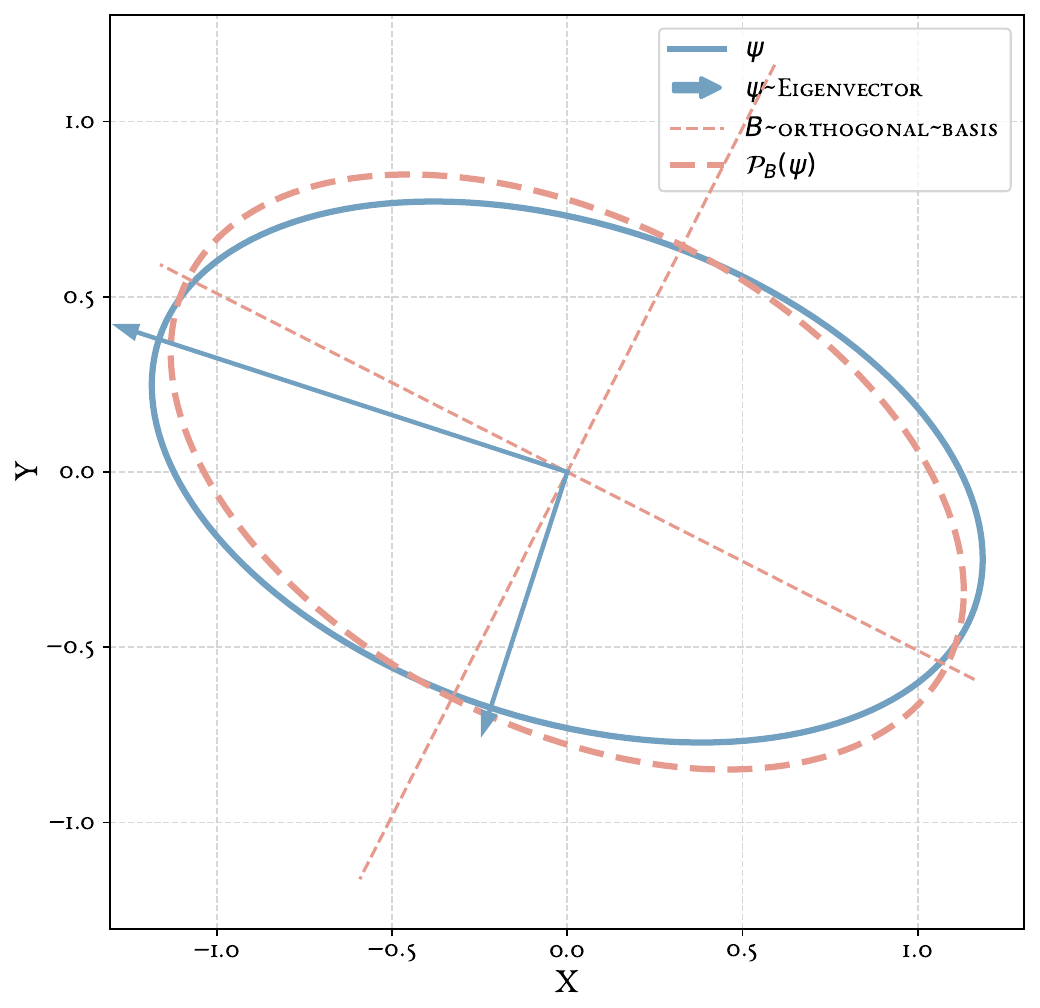}
		\caption*{\scriptsize \(\left\Vert \bm\psi - \mathcal P_{\bm B}(\bm\psi) \right\Vert_F^2 \approx 0.22\)}
	\end{subfigure}
	\caption{Illustration of projecting a symmetric positive semi-definite (SPSD) matrix \(\bm{\psi}\) onto an orthogonal matrix subspace defined by \(\bm{B}\). The squared Frobenius norm \(\left\Vert \bm{\psi} - \mathcal{P}_{\bm{B}}(\bm{\psi}) \right\Vert_F^2\) measures the projection error. Left: example with larger error (\(\approx 0.63\)), indicating that the subspace \(\bm{B}\) poorly captures the structure of \(\bm{\psi}\). Right: example with smaller error (\(\approx 0.22\)), where \(\bm{B}\) more accurately preserves the geometry of \(\bm{\psi}\).}
	\label{fig:ellipse_demos}
\end{figure}
\begin{theorem}\label{sec3:thm1}
	For a given orthogonal basis matrix $\mathbf B \in \mathcal V_p(\Re^p)$ and a SPSD matrix $\bpsi \in \mathbb S_+^p$, the projection index $\bm\Lambda_{\mathbf B}(\bpsi)$ that minimizes the \cref{sec2:eq:projectionlambda} is equal to the diagonal of the matrix $\mathbf B^\intercal \bm\psi \mathbf B$, that is:
	\begin{align}\label{eq:solution_lambda}
		\bm\Lambda_{\mathbf B}(\bm\psi) =  (\mathbf B^\intercal \bm\psi \mathbf B) \odot  \mathbf I,
	\end{align}
	assuming the eigenvalues of \(\bpsi\) are unique.
\end{theorem}
\begin{proof}
	Let $\bpsi = \mathbf U \mathbf D^* \mathbf U^\intercal$ be the spectral
	decomposition of $\bpsi$, and let $\mathbf B$ be a given orthonormal
	matrix. Our goal is to find $\mathbf D \in \mathbb D_+^p$ that minimizes
	the quantity:
	\[
		\underset{\mathbf D \in \mathbb D_+^p}{\min}
		\Vert \bpsi - \mathbf B \mathbf D \mathbf B^\intercal \Vert_\F^2.
	\]

	We can express the objective function as:
	\begin{align*}
		\Vert \bpsi - \mathbf B \mathbf D \mathbf B^\intercal \Vert_\F^2
		 & = \Vert \mathbf U \mathbf D^* \mathbf U^\intercal
		- \mathbf B \mathbf D \mathbf B^\intercal \Vert_\F^2                                                                                                                \\
		 & = \tr\Big\{
		\left( \mathbf U \mathbf D^* \mathbf U^\intercal
		- \mathbf B \mathbf D \mathbf B^\intercal \right)^\intercal
		\left( \mathbf U \mathbf D^* \mathbf U^\intercal
		- \mathbf B \mathbf D \mathbf B^\intercal \right)
		\Big\}                                                                                                                                                              \\
		 & = \tr\left\{
		\mathbf U \mathbf D^{*^2} \mathbf U^\intercal
		- 2\bpsi \mathbf B \mathbf D \mathbf B^\intercal
		+ \mathbf B \mathbf D^2 \mathbf B^\intercal
		\right\}                                                                                                                                                            \\
		 & = \tr\left\{ \mathbf U \mathbf D^{*^2} \mathbf U^\intercal - 2\mathbf B^\intercal \bpsi \mathbf B \mathbf D + \mathbf B \mathbf D^2 \mathbf B^\intercal \right\}
	\end{align*}

	Let $d_1^* \ge d_2^* \ge \cdots \ge d_p^*$ denote the ordered eigenvalues of $\bpsi$, and $d_1 \ge d_2 \ge \cdots \ge d_p$ denote the diagonal elements of $\mathbf D$, the objective function can be simplifies to:
	\begin{align}\label{sec2:eq:sumdj}
		\sum_{j=1}^p \left( d_j^{*2} - 2 a_j d_j + d_j^2 \right),
	\end{align}
	where $a_j$ are the diagonal elements of $\mathbf B^\intercal \bpsi \mathbf B$.
	The theorem is proved by noting that the individual terms in this summation, as a function of the $d_j$,  are minimized when $d_j=a_j$.
\end{proof}

\subsection{Common Principal Components (CPC)} \label{sec2.2}
The previously defined projection index $\bm\Lambda_{\mathbf B}(\bpsi)$ is based on a given orthogonal basis matrix $\mathbf B$ and a given observation $\bpsi$. A natural question that arises is which orthogonal basis matrix best characterizes the distribution of a random matrix $\bPsi \in \mathbb S_+^p$. To address this, the notion of common principal components (CPC) for random symmetric positive semi-definite (SPSD) matrices is introduced in \Cref{sec3:def:cpc}. \Cref{sec4:prop_equiv} establishes an equivalence between minimizing \(\E \left[ \left\Vert \bPsi - \mathcal P_{\mathbf B}\left( \bPsi \right) \right\Vert_{\F}^2 \right]\), and maximizing \( \E \left[\bm{\Lambda}_{\bm{B}}(\bm{\Psi})\right] \). This equivalence provides an alternative perspective for estimating CPCs. This viewpoint leads to \Cref{prop:cpc_fg_algo}, where the stationary condition for CPCs is derived via a least-squares formulation. While \citet{flury1984common} provided the CPC stationary conditions under the assumption that \( \bm{\Psi} \) follows a Wishart distribution, \Cref{prop:cpc_fg_algo} generalizes the result by framing the estimation as a least-squares problem. This formulation can also be solved using the Flury–Gautschi (FG) algorithm proposed in \citet{flury1984common}.

\begin{definition}\label{sec3:def:cpc}
	The common principal components (CPCs) of a random SPSD matrix $\bPsi$ is defined as follows:
	\begin{align}\label{sec3:cpcestimation}
		\underset{\mathbf B \in \mathcal V_p(\Re^p)}{\arg\min} \ \E \left\{ \left\Vert \bPsi - \mathcal P_{\mathbf B}\left( \bPsi \right) \right\Vert_{\F}^2 \right\},
	\end{align}
	The solution is an orthogonal basis matrix whose columns represent the common eigenvectors that diagonalize  \( \bm{\Psi} \) by its distribution.
\end{definition}
\begin{figure}[htp]
	\centering
	\begin{subfigure}[t]{0.49\linewidth}
		\centering
		\includegraphics[width=0.95\linewidth]{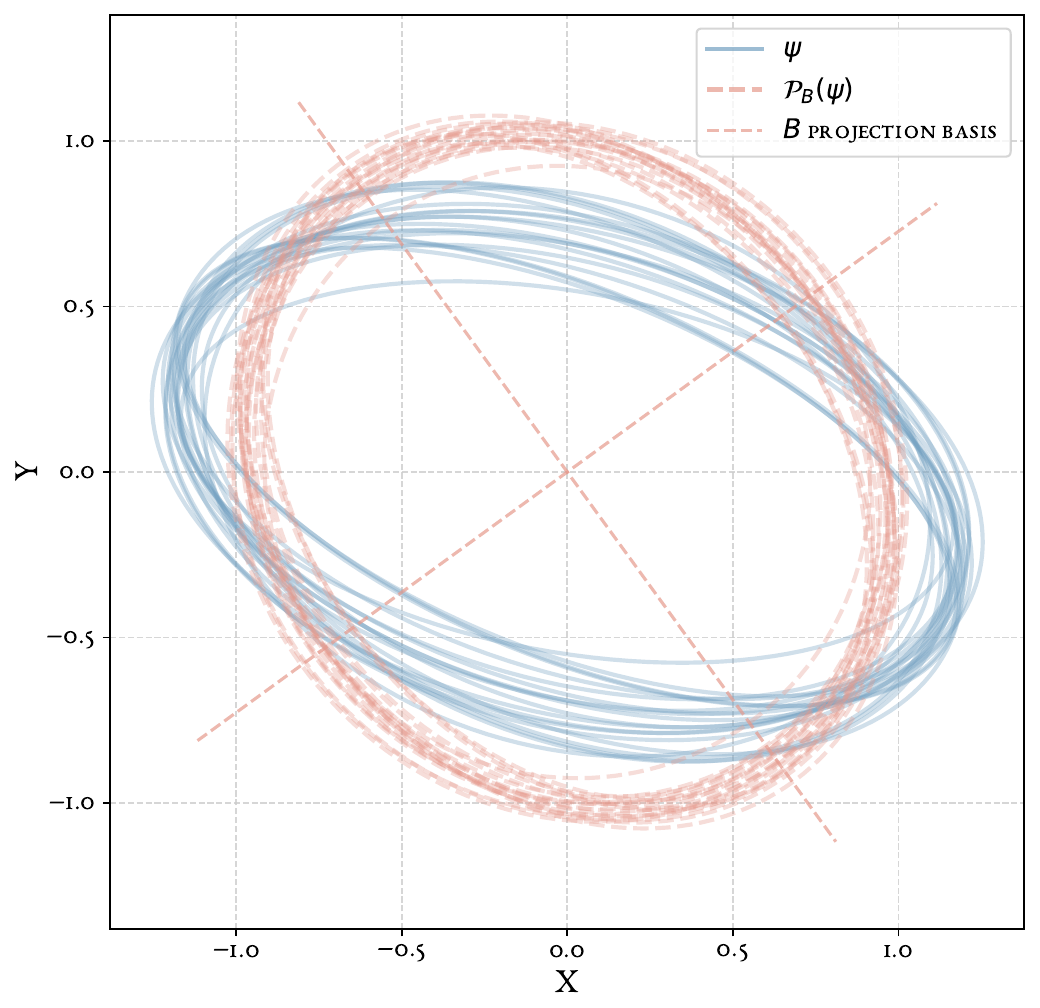}
		\caption*{\scriptsize  \(\sum_{i} \left\Vert \bm\psi_i - \mathcal P_{\bm B}(\bm\psi_i) \right\Vert_F^2 \approx 0.7\)}
	\end{subfigure}%
	\begin{subfigure}[t]{0.49\linewidth}
		\centering
		\includegraphics[width=0.95\linewidth]{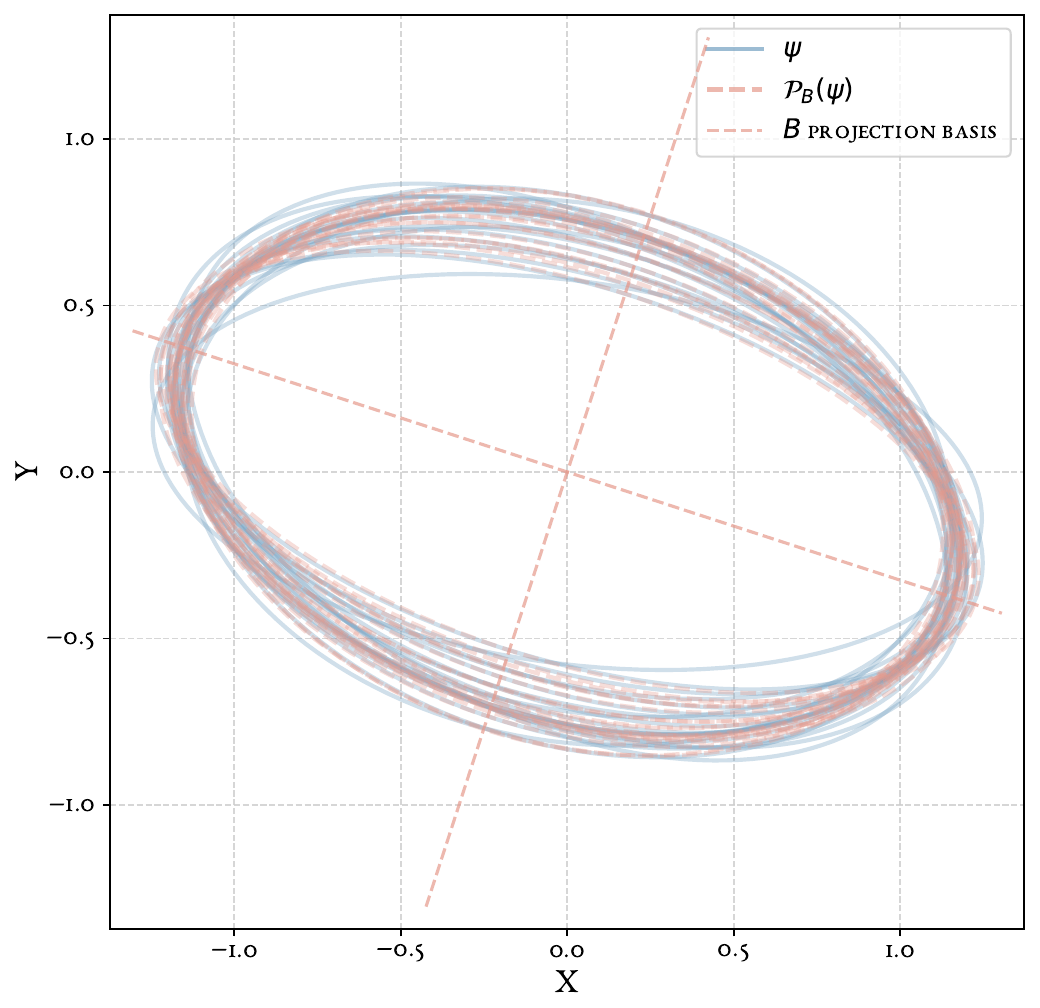}
		\caption*{\scriptsize \(\sum_{i} \left\Vert \bm\psi_i - \mathcal P_{\bm B}(\bm\psi_i) \right\Vert_F^2 \approx 0.13\)}
	\end{subfigure}
	\caption{Illustration of Common Principal Components (CPC). The blue ellipses represent the original matrices \(\bm{\psi}\), and the dashed pink ellipses represent their projections onto an orthogonal matrix subspace \(\mathcal{P}_{\bm{B}}(\bm{\psi})\). Dashed lines indicate the axes of the orthogonal matrix. Left: projection using an arbitrary orthogonal matrix, resulting in a larger expected projection error (\(\approx 0.70\)). Right: projection using the CPC matrix, which aligns the subspace with the common eigenstructure across matrices and achieves a smaller expected projection error (\(\approx 0.13\)), indicating better preservation of the original matrices' structure.}
	\label{fig:ellipse_demos_cpc}
\end{figure}
\Cref{fig:ellipse_demos_cpc} provides a graphical illustration of SPSD matrices and their projections onto both a random orthogonal matrix and the CPCs. The figure demonstrates how the choice of orthogonal basis affects the projection error, with the CPC basis yielding a smaller expected error and better preserving the eigenstructure of the original matrices.
\begin{proposition}\label{sec4:prop_equiv}
	Let \(\bPsi \in \S_+^p\) be a random SPSD matrix and \(\mathcal{P}_{\mathbf{B}}(\bPsi) = \mathbf{B} \bm{\Lambda}_{\mathbf{B}}(\bPsi) \mathbf{B}^\intercal\) denote its projection onto a subspace defined by orthogonal basis matrix \(\mathbf{B} \in \mathcal{V}_p(\Re^p)\), where \(\bm{\Lambda}_{\mathbf{B}}(\bPsi) = \bm B \bm\Psi \bm B^\intercal \odot \bm I\) is the random diagonal matrix of \(\bPsi\) determined by the basis \(\mathbf{B}\).

	Then minimizing the expected squared Frobenius norm of the residual (off-diagonal component) of \(\bPsi\):
	\[
		\underset{\mathbf{B}}{\minimize} \ \mathbb{E}\left[\left\| \bPsi - \mathcal{P}_{\mathbf{B}}(\bPsi) \right\|_{\F}^2\right],
	\]
	is equivalent to maximizing the expected squared Frobenius norm of the projection:
	\[
		\underset{\mathbf{B}}{\maximize} \ \mathbb{E}\left[\left\| \mathcal{P}_{\mathbf{B}}(\bPsi)  \right\|_{\F}^2\right],
	\]
	which can also be expressed as maximizing the expected Froenius norm of the diagonal component in the basis \(\mathbf{B}\):
	\[
		\underset{\mathbf{B}}{\maximize} \ \mathbb{E}\left[\left\| \bm{\Lambda}_{\mathbf{B}}(\bPsi) \right\|_{\F}^2\right].
	\]
\end{proposition}

\begin{proof}[Proof of Proposition~\ref{sec4:prop_equiv}]
	To obtain the common principal components $\mathbf B$, the goal is to find an orthonormal matrix that minimizes \cref{sec3:cpcestimation}:
	\begin{align}
		\E\left[ \left\Vert \bm\Psi - \mathcal P_{\mathbf B}(\bm\Psi) \right\Vert_{\F}^2  \right] \notag
		 & = \E \left[ \left\Vert \mathbf B \left(\mathbf B^\intercal \bm\Psi \mathbf B \right) \mathbf B^\intercal - \mathbf B \left(\bm\Lambda_{\mathbf B}(\bm\Psi)\right) \mathbf B^\intercal \right\Vert_{\F}^2 \right]  \notag                                                                                                                            \\
		 & = \E \left[ \left\Vert \mathbf B^\intercal \bm\Psi \mathbf B - \left(\mathbf B^\intercal \bm\Psi \mathbf B \right) \odot  \mathbf I  \right\Vert_{\F}^2 \right] \qquad (\text{by \Cref{sec3:thm1}})  \notag                                                                                                                                         \\
		 & = \E \left[ \tr\left\{ \left( \bm\Psi^\intercal \bm\Psi \right) - 2\left(\mathbf B^\intercal \bm\Psi \mathbf B \right)^\intercal\left( \left(\mathbf B^\intercal \bm\Psi \mathbf B \right) \odot   \mathbf I   \right) + \left( \left( \mathbf B^\intercal \bm\Psi \mathbf B \right) \odot  \mathbf I  \right)^2 \right\} \right] \label{sec4:eq15} \\
		 & =  \E \Bigl[ \tr \left( \bPsi^\intercal \bPsi \right) \Bigr]  - \E \left[\tr \left(\left( \Bigl( \mathbf B^\intercal \bm\Psi \mathbf B \right) \odot  \mathbf I  \Bigr)^2\right)    \right]   \notag
	\end{align}
	where in \cref{sec4:eq15}, denote \( (\bm B^\intercal \bPsi \bm B) \odot \mathbf I \) as \(\mathbf D = \text{diag}(d_1,\ldots, d_p)\), here \(d_j = \left( \bm B^\intercal \bm\Psi \bm B \right)_{jj} = \bm b_j^\intercal \bm\Psi \bm b_j\), and \(\bm b_j\) is the \(j\)-th column of \(\mathbf B\) for \(j = 1,\ldots, p\).
	The second term in this last expression simplifies to
	$$\sum_{j=1}^p{d_j^2}.$$
	Since, \(\E \Bigl[ \tr \left( \bPsi^\intercal \bPsi \right) \Bigr]\) is independent with \(\bm B\), to minimize $\E\left[ \left\Vert \bm\Psi - \mathcal P_{\mathbf B}(\bm\Psi) \right\Vert_{\F}^2  \right]$ over $\mathbf B$ is equivalent to maximize
	\begin{align}
		\E \Big[\Vert  \big( \mathcal P_{\mathbf B}(\bPsi)\big) \Vert_F^2 \Big] = \E \Big[ \tr \Big( \left( \left( \mathbf B^\intercal \bm\Psi \mathbf B \right) \odot  \mathbf I  \right)^2  \Big) \Big]\label{sec4:eq17} = \mathbb E \Big[\Vert \bm\Lambda_{\bm B}(\bPsi) \Vert_F^2\Big] = \sum_{j=1}^p \mathbb{E} \left[ \left( \bm{b}_j^\intercal \bm{\Psi} \bm{b}_j \right)^2 \right]
	\end{align}
	over \(\bm B\), where \(\bm B = [\bm b_1, \ldots, \bm b_p] \in \mathcal V_p(\mathbb R^p)\) is an orthogonal basis matrix.
\end{proof}

Building upon \cref{sec4:prop_equiv}, we provide a least-squares solution of CPC via minimizing the expected Frobenius norm of the projection index \(\bm \Lambda_{\bm B}(\bm \Psi)\).

\begin{theorem}[Common Principal Components via Projection Index Maximization] \label{prop:cpc_fg_algo}
	Let \( \bm{\Psi} \in \mathbb{S_+}^{p} \) be a random symmetric positive semi-definite matrix, and {let} \( \bm{B} = [\bm{b}_1, \bm{b}_2, \ldots, \bm{b}_p] \in \mathcal V_p(\mathbb R^p) \).
	If the columns of \( \bm{B} \) are such that
	\begin{equation}\label{eq:opt_cond_cpc}
		\text{maximize} \quad
		\sum_{j=1}^p \mathbb{E} \left[ \left( \bm{b}_j^\intercal \bm{\Psi} \bm{b}_j \right)^2 \right],
	\end{equation}
	then the following optimality condition holds for all \( j \ne h \):
	\begin{align}
		\bm{b}_j^\intercal \, \mathbb{E} \left[ \left( \bm{b}_j^\intercal \bm{\Psi} \bm{b}_j - \bm{b}_h^\intercal \bm{\Psi} \bm{b}_h \right) \bm{\Psi} \right] \bm{b}_h = 0.
	\end{align}
\end{theorem}
This optimality condition is equivalent to the maximum likelihood estimates shown in \citet{flury1984common}, and can be solved by the algorithm proposed in \citet{flury1986algorithm}.
\begin{proof}
	To prove Theorem~\ref{prop:cpc_fg_algo} and obtain a solution for \( \bm{B} \),
	Using the properties of an orthogonal matrix with orthonormal columns,
	we can formulate the Lagrangian function with respect to each column space of \( \bm{B} \) as follows (\( \bm{B} = [\bm{b}_1, \bm{b}_2, \ldots, \bm{b}_p] \)):
	\begin{align}
		\mathcal{L}(\bm{b}_1,\ldots,\bm{b}_p \mid \bm{\Psi})
		= \sum_{j=1}^p \mathbb{E} \left[ (\bm{b}_j^\intercal \bm{\Psi} \bm{b}_j)^2 \right]
		- 2 \sum_{j=1}^p \sum_{h>j}^p \gamma_{hj} \bm{b}_h^\intercal \bm{b}_j
		- \sum_{h=1}^p \gamma_h \bm{b}_h^\intercal \bm{b}_h,
	\end{align}
	where \( \bm{b}_j \) and \( \bm{b}_h \) are the \( j^{\text{th}} \) and \( h^{\text{th}} \) columns of \( \bm{B} \). Taking the partial derivatives with respect to \( \bm{b}_\ell \) and setting {them} to zero, we obtain:
	\begin{align} \label{sec4:eq20}
		\frac{\partial \mathcal{L}}{\partial \bm{b}_\ell}
		= 2 \mathbb{E} \left[ (\bm{b}_\ell^\intercal \bm{\Psi} \bm{b}_\ell) \bm{\Psi} \bm{b}_\ell \right]
		- 2 \sum_{\substack{h=1 \\ h\neq\ell}}^p \gamma_{h\ell} \bm{b}_h
		- 2 \gamma_\ell \bm{b}_\ell = 0.
	\end{align}
	Multiplying Equation~\eqref{sec4:eq20} on the left by \( \frac{1}{2} \bm{b}_\ell^\intercal \),
	\begin{align}\label{eq:multiplied_by_beta_l}
		\frac{\partial \mathcal{L}}{\partial \bm{b}_\ell}
		= \mathbb{E} \left[ \bm{b}_\ell^\intercal (\bm{b}_\ell^\intercal \bm{\Psi} \bm{b}_\ell) \bm{\Psi} \bm{b}_\ell \right]
		-  \sum_{\substack{h=1 \\ h\neq\ell}}^p \gamma_{h\ell} \bm{b}_\ell^\intercal \bm{b}_h
		-  \gamma_\ell \bm{b}_\ell^\intercal \bm{b}_\ell = 0,
	\end{align}
	since \(\bm{b}_\ell^\intercal \bm{\Psi} \bm{b}_\ell\) is a scalar, the first term in \cref{eq:multiplied_by_beta_l} \(\bm{b}_\ell^\intercal (\bm{b}_\ell^\intercal \bm{\Psi} \bm{b}_\ell) \bm{\Psi} \bm{b}_\ell  = (\bm{b}_\ell^\intercal \bm{\Psi} \bm{b}_\ell)^2 \), the second term and third term in \cref{eq:multiplied_by_beta_l} becomes \(0\), and \(\gamma_\ell\) respectively. Thus, it is obtained that:
	\[
		\mathbb{E} \left[ (\bm{b}_\ell^\intercal \bm{\Psi} \bm{b}_\ell)^2 \right]
		- \sum_{\substack{h=1\\ h\neq\ell}}^p \gamma_{h\ell} \bm{b}_\ell^\intercal \bm{b}_h
		- \gamma_\ell \bm{b}_\ell^\intercal \bm{b}_\ell = 0,
	\]
	implying that
	\[
		\gamma_\ell = \mathbb{E} \left[ (\bm{b}_\ell^\intercal \bm{\Psi} \bm{b}_\ell)^2 \right].
	\]
	Now substitute \( \gamma_\ell \) back into Equation~\eqref{sec4:eq20} and multiply it from the left by \( \bm{b}_m^\intercal \), which implies:
	\begin{align} \label{sec4:eq22}
		\mathbb{E} \left[ (\bm{b}_\ell^\intercal \bm{\Psi} \bm{b}_\ell) \cdot (\bm{b}_m^\intercal \bm{\Psi} \bm{b}_\ell) \right]
		- \sum_{\substack{h=1 \\ h\neq\ell}}^p \gamma_{h\ell} \bm{b}_m^\intercal \bm{b}_h
		- \mathbb{E} \left[ (\bm{b}_\ell^\intercal \bm{\Psi} \bm{b}_\ell)^2 \right] \bm{b}_m^\intercal \bm{b}_\ell = 0.
	\end{align}
	For \( m \neq \ell \), the second term in \cref{sec4:eq22} becomes \( \sum_{\substack{h=1\\ h\neq\ell}}^p \gamma_{h\ell} \bm{b}_m^\intercal \bm{b}_h = \gamma_{m\ell} \), when \(h = m\). Therefore, \cref{sec4:eq22} implies that
	\[
		\gamma_{m\ell} = \mathbb{E} \left[ (\bm{b}_\ell^\intercal \bm{\Psi} \bm{b}_\ell)(\bm{b}_m^\intercal \bm{\Psi} \bm{b}_\ell) \right]
		= \mathbb{E} \left[ (\bm{b}_\ell^\intercal \bm{\Psi} \bm{b}_\ell)(\bm{b}_\ell^\intercal \bm{\Psi} \bm{b}_m) \right],
	\]
	since \( \bm{b}_\ell^\intercal \bm{\Psi} \bm{b}_m = \bm{b}_m^\intercal \bm{\Psi} \bm{b}_\ell \). Now, by setting \( \gamma_{m\ell} = \gamma_{\ell m} \),
	\[
		\mathbb{E} \left[ (\bm{b}_\ell^\intercal \bm{\Psi} \bm{b}_\ell)(\bm{b}_m^\intercal \bm{\Psi} \bm{b}_\ell) \right] = \mathbb{E} \left[ (\bm{b}_m^\intercal \bm{\Psi} \bm{b}_m)(\bm{b}_\ell^\intercal \bm{\Psi} \bm{b}_m) \right] = \mathbb{E} \left[ (\bm{b}_m^\intercal \bm{\Psi} \bm{b}_m)(\bm{b}_m^\intercal \bm{\Psi} \bm{b}_\ell) \right]
	\]
	we obtain:
	\begin{align} \label{sec4:eq22_final}
		\bm{b}_\ell^\intercal \, \mathbb{E} \left[ \left( \bm{b}_\ell^\intercal \bm{\Psi} \bm{b}_\ell - \bm{b}_m^\intercal \bm{\Psi} \bm{b}_m \right) \bm{\Psi} \right] \bm{b}_m = 0.
	\end{align}
	This holds for all \( \ell, m = 1, \ldots, p \), with \( m \neq \ell \).
\end{proof}

\subsection{Principal SPSD Tensors and Self-Consistent SPSD Tensors} \label{sec2.3}

Let $\mathbf{B}_1, \ldots, \mathbf{B}_K \in \mathcal{V}_p(\mathbb{R}^p)$ denote $K$ distinct {orthogonal} matrices. The minimal distance of a given $\bm{\psi} \in \mathcal{S}_+^p$ to the set $\left\{\mathcal{P}_{\mathbf{B}_k}(\bm{\psi}) \right\}_{k=1}^K$ is defined as
\begin{align*}
	\mathbf{d}^2(\bm{\psi} | \mathcal{P}_{\mathbf{B}_1}(\bm{\psi}), \ldots, \mathcal{P}_{\mathbf{B}_K}(\bm{\psi})) = \underset{k \in \{1,\dots, K\}}{\min} \left\Vert \bm{\psi} - \mathcal{P}_{\mathbf{B}_k}(\bm{\psi}) \right\Vert_{F}^2.
\end{align*}

\begin{definition}\label{def:kprincipal_tensors}
	We call $\mathcal{P}_{\mathbf{B}_1}(\bm{\Psi}), \ldots, \mathcal{P}_{\mathbf{B}_K}(\bm{\Psi})$ \(K\) \textit{principal positive semi-definite tensors} of the random matrix \(\bm{\Psi}\) if
	\[
		\mathbb{E} \left[ \mathbf{d}^2\left( \bm{\Psi} \,\middle\vert\, \mathcal{P}_{\mathbf{B}_1}(\bm{\Psi}), \ldots, \mathcal{P}_{\mathbf{B}_K}(\bm{\Psi}) \right) \right]
		=
		\min_{\mathbf{B}_1',\ldots,\mathbf{B}_K' \in \mathcal{V}_p(\mathbb{R}^p)}
		\mathbb{E} \left[ \mathbf{d}^2\left( \bm{\Psi} \,\middle\vert\, \mathcal{P}_{\mathbf{B}_1'}(\bm{\Psi}), \ldots, \mathcal{P}_{\mathbf{B}_K'}(\bm{\Psi}) \right) \right],
	\]
	for any collection $K$ orthogonal matrices $\{\mathbf{B}_1', \dots, \mathbf{B}_K'$\}.
\end{definition}
\begin{definition}\label{sec3:def1}
	Let \( \mathcal{B} \in \{ \bm{B}_1, \ldots, \bm{B}_K \} \), where \(\bm B_k \in \mathcal V_p(\mathbb R^p)\), we {call} \( \mathcal{P}_{\mathcal{B}}(\bm{\Psi}) \)  {a set of} \emph{K self-consistent SPSD tensors} of \( \bm{\Psi} \) if it satisfies the self-consistency condition:
	\begin{align}
		\mathbb{E}\left[\bm{\Psi} \,\middle|\, \bm{\Lambda}_{\mathcal{B}}(\bm{\Psi})\right]
		= \mathcal{P}_{\mathcal{B}}(\bm{\Psi}) \qquad \text{a.s.}.
	\end{align}
\end{definition}
\Cref{sec3:def1} is analogous to the notion of self-consistency defined in \cref{eq:principal_points}.
Conditionally, we call \(\mathcal P_{\bm B}(\bm \Psi)\) the \emph{self-consistent tensor} of \(\bm \Psi\) if
\begin{align}
	\mathbb{E}\left[\bm{\Psi} \,\middle|\, \mathcal{B} = \mathbf{B}\right]
	= \mathbf{B} \bm{\Lambda}_{\mathbf{B}}(\bm{\Psi}) \mathbf{B}^\intercal \quad \text{a.s.}.
\end{align}
Finally, we have a pointwise self-consistency condition:
\begin{align}\label{eq:pointwise_sc}
	\mathbb{E}\left[\bm{\Psi} \,\middle|\, \mathcal{B} = \mathbf{B},\; \bm{\Lambda}_{\mathbf{B}}(\bm{\Psi}) = \bm{\Lambda} \right]
	= \mathbf{B} \bm{\Lambda} \mathbf{B}^\intercal \quad \text{a.s.}.
\end{align}
The self-consistency of the principal SPSD tensor presented in \cref{eq:pointwise_sc} is established under the Wishart distribution in \Cref{thm:self-consistency_spsd_tensors}.

\subsection{Domain of Attraction to an Principal SPSD Tensors} \label{sec2.4}
An equally important concept is that of the \textit{domain of attraction}, which formalizes how individual observations are associated with these principal structures. For each principal SPSD tensor \( \mathcal{P}_{\bm{B}_k}(\bm{\Psi}) \), parameterized by a basis matrix \( \mathbf{B}_k \in \mathcal{V}_p(\mathbb{R}^p) \), the domain of attraction defines a region in the space of SPSD matrices where the observed matrices exhibit eigenstructures most aligned with \( \mathbf{B}_k \). This partitioning forms the foundation for assigning observations to their closest CPC subspace and serves as a key building block in clustering and other subspace-based learning tasks.
\begin{definition}\label{sec3:domain}
	Given a set of distinct $\B_\k \in\mathcal V_p(\R^p), \; k=1,\dots, \K$,
	we define $\mathcal D_\k$ the \textit{domain of attraction} for each
	$\mathbf B_\k$ as:
	\begin{align}\label{sec3: assignments}
		\mathcal D_k := \Big\{ & \bpsi \in \S_+^p,  :
		\left\Vert \bpsi - \mathcal P_{\mathbf B_k}(\bpsi) \right\Vert_{\F}^2 = \inf_{h} \{\left\Vert \bpsi - \mathcal P_{\mathbf B_h}(\bpsi)
		\right\Vert_{\F}^2 : h=1,...,K \} \Big\}{.}
	\end{align}
\end{definition}
An observation is assigned to a specific set $\mathcal D_\k$ if its projection, parameterized by $\B_\k$, is closest to the observation compared to the projections using other $\B_h$ for $h = 1,\ldots, K$ from the set.
Throughout, it is assumed that the underlying distributions of SPSD matrices under consideration are continuous, in which case the boundaries between different domains of attraction have probability zero and can therefore be ignored.

\section{$K$-Tensors Clustering Algorithm for SPSD Matrices}
\label{sec3}


\subsection{$K$-Tensors Algorithm}

Drawing on the concepts introduced in the preceding sections, we propose a novel iterative clustering algorithm, called $K$-Tensors, to cluster a collection $\{\bpsi_1, \dots, \bpsi_n\}$ of SPSD matrices. This algorithm utilizes the principal SPSD tensors to approximate the distribution of each cluster, resulting in cluster assignments that reflect differences in eigenstructure. The $K$-Tensors algorithm comprises two key steps: (1) identifying the common eigenvectors within each cluster of SPSD matrices using the F-G algorithm \citep{flury1984common}, and (2) assigning all matrices to their closest cluster based on the shape information defined in \Cref{sec3:domain}. The algorithm iterates between these steps until convergence is achieved.

\begin{algorithm}[H]
	\caption{$K$-Tensors with FG Algorithm for CPC Estimation}\label{algo:ktensors}
	\label{algo:ktensor_fg}
	\begin{algorithmic}[1]
		\STATE \textbf{Input:} SPSD matrices \( \{\bm{\psi}_1, \dots, \bm{\psi}_n\} \subset \mathbb{S}_+^p \), number of clusters \( K \), tolerance \( \texttt{tol} \)
		\STATE \textbf{Initialize:} Randomly partition \( \{\bm{\psi}_i\}_i^n \) into non-overlapping \( \{\mathcal{D}_k\}_{k=1}^K \)
		\REPEAT
		\FOR{each cluster \( k = 1, \dots, K \)}
		\STATE Initialize \( \mathbf{B}_k \leftarrow \mathbf{I}_p \)
		\REPEAT
		\STATE Store previous \( \mathbf{B}_k^{\text{old}} \leftarrow \mathbf{B}_k \)
		\FOR{each pair \( (l, j) \) with \( 1 \leq l < j \leq p \)}
		\STATE Compute matrices \( \bm T_i^{(l,j)} =
		\begin{bmatrix}
			\mathbf{b}_{k,l}^\intercal \bm{\psi}_i \mathbf{b}_{k,l} & \mathbf{b}_{k,l}^\intercal \bm{\psi}_i \mathbf{b}_{k,j} \\
			\mathbf{b}_{k,j}^\intercal \bm{\psi}_i \mathbf{b}_{k,l} & \mathbf{b}_{k,j}^\intercal \bm{\psi}_i \mathbf{b}_{k,j}
		\end{bmatrix} \), where \(\bm b_{k,l}, \bm b_{k,j}\) are the $l$-th and $j$-th column of \(\bm B_k\).
		\STATE Initialize \( \mathbf{Q} \leftarrow \mathbf{I}_2 \)
		\REPEAT
		\STATE Store \( \mathbf{Q}_{\text{old}} \leftarrow \mathbf{Q} \)
		\STATE Compute:
		\(
		\mathbf{U} = \sum_{i} \cdot \frac{d_{i1} - d_{i2}}{d_{i1} d_{i2}} \cdot \bm T_i^{(l,j)}
		\)
		where \( d_{i1} = \mathbf{q}_1^\intercal \bm T_i^{(l,j)} \mathbf{q}_1, d_{i2} = \mathbf{q}_2^\intercal \bm T_i^{(l,j)} \mathbf{q}_2 \), \(\bm q_1, \bm q_2\) are the first and second column of matrix \(\bm Q\)
		\STATE Compute eigen-decomposition \( \mathbf{U} = \mathbf{V} \bm{\Lambda} \mathbf{V}^\intercal \)
		\STATE Update \( \mathbf{Q} \leftarrow \mathbf{V} \)
		\UNTIL{ \( \| \mathbf{Q} - \mathbf{Q}_{\text{old}} \| < \texttt{tol} \) }
		\STATE Update columns \( l \) and \( j \): \( \mathbf{B}_k[:, \{l,j\}] \leftarrow \mathbf{B}_k[:, \{l,j\}] \cdot \mathbf{Q} \)
		\ENDFOR
		\UNTIL{ \( \| \mathbf{B}_k - \mathbf{B}_k^{\text{old}} \| < \texttt{tol} \) }
		\ENDFOR
		\STATE \textbf{Update Assignments:}
		\FOR{each observation \( \bm{\psi}_i \)}
		\STATE Assign to cluster:
		\[
			\mathcal{D}_k \leftarrow \arg\min_{k} \left\| \bm{\psi}_i - \mathcal{P}_{\mathbf{B}_k}(\bm{\psi}_i) \right\|_F^2
		\]
		\ENDFOR
		\UNTIL{ assignments \( \{ \mathcal{D}_k \}_{k=1}^K \) do not change }
		\STATE \textbf{Output:} Clusters \( \{ \mathcal{D}_k \}_{k=1}^K \) and CPC bases \( \{ \mathbf{B}_k \}_{k=1}^K \)
	\end{algorithmic}
\end{algorithm}

\begin{theorem}[CPC for Matrices with Constant Directional Kurtosis]\label{prop:cpc_constant_kurtosis}
	Let \( \bm{\Psi} \in \mathbb{S}_+^p \) be a random symmetric positive semi-definite matrix with \( \mathbb{E}[\bm{\Psi}] = \bm{\Sigma} \), and let \( \bm{B} = [\bm{b}_1, \ldots, \bm{b}_p] \in \mathcal{V}_p(\mathbb{R}^p) \). Suppose that for all unit vectors \( \bm{b} \in \mathbb{R}^p \), the kurtosis of the projected quadratic form is constant, i.e.,
	\[
		\frac{\mathbb{E}[(\bm{b}^\intercal \bm{\Psi} \bm{b})^2]}{(\bm{b}^\intercal \mathbb{E}[\bm{\Psi}] \bm{b})^2} = c, \quad \text{for some constant } c > 0.
	\]
	Then the columns of \( \bm{B} \) that maximize \cref{eq:opt_cond_cpc}:
	\[
		\sum_{j=1}^p \mathbb{E}[(\bm{b}_j^\intercal \bm{\Psi} \bm{b}_j)^2]
		\quad \text{subject to} \quad \bm{B}^\intercal \bm{B} = \bm{I}_p
	\]
	are given by the eigenvectors of \( \mathbb{E}[\bm{\Psi}] \). That is, the CPC directions correspond to the eigenvectors of \( \bm{\Sigma} \).
\end{theorem}
\begin{proof}\label{proof:cpc_constant_kurtosis}
	Let \( \bm{b} \in \mathbb{R}^p \) be a unit vector, i.e., \( \| \bm{b} \| = 1 \). Consider the scalar random variable \( Z = \bm{b}^\intercal \bm{\Psi} \bm{b} \), which is a quadratic form in \( \bm{\Psi} \). Its expectation is:
	\[
		\mathbb{E}[Z] = \mathbb{E}[\bm{b}^\intercal \bm{\Psi} \bm{b}] = \bm{b}^\intercal \mathbb{E}[\bm{\Psi}] \bm{b} = \bm{b}^\intercal \bm{\Sigma} \bm{b}.
	\]
	Let us expand the second moment \( \mathbb{E}[Z^2] \) using the variance decomposition:
	\[
		\mathbb{E}[Z^2] = \text{Var}(Z) + \left( \mathbb{E}[Z] \right)^2 = \text{Var}(\bm{b}^\intercal \bm{\Psi} \bm{b}) + \left( \bm{b}^\intercal \bm{\Sigma} \bm{b} \right)^2.
	\]
	Under the assumption of constant directional kurtosis, we have:
	\[
		\frac{\mathbb{E}[Z^2]}{(\bm{b}^\intercal \bm{\Sigma} \bm{b})^2} = c \quad \Rightarrow \quad \mathbb{E}[Z^2] = c \cdot (\bm{b}^\intercal \bm{\Sigma} \bm{b})^2.
	\]
	Substituting into the variance decomposition:
	\[
		\text{Var}(\bm{b}^\intercal \bm{\Psi} \bm{b}) = \mathbb{E}[Z^2] - (\mathbb{E}[Z])^2 = \left( c - 1 \right) \cdot (\bm{b}^\intercal \bm{\Sigma} \bm{b})^2.
	\]
	Now, for a set of orthonormal directions \( \bm{b}_1, \ldots, \bm{b}_p \in \mathbb{R}^p \), define the objective:
	\[
		\sum_{j=1}^p \mathbb{E} \left[ \left( \bm{b}_j^\intercal \bm{\Psi} \bm{b}_j \right)^2 \right].
	\]
	Applying the constant kurtosis assumption, each term becomes:
	\[
		\mathbb{E} \left[ \left( \bm{b}_j^\intercal \bm{\Psi} \bm{b}_j \right)^2 \right] = c \cdot \left( \bm{b}_j^\intercal \bm{\Sigma} \bm{b}_j \right)^2.
	\]
	Thus, the total objective becomes:
	\[
		\sum_{j=1}^p \mathbb{E} \left[ \left( \bm{b}_j^\intercal \bm{\Psi} \bm{b}_j \right)^2 \right]
		= c \sum_{j=1}^p \left( \bm{b}_j^\intercal \bm{\Sigma} \bm{b}_j \right)^2.
	\]
	Since \( c > 0 \) is a constant, maximizing the objective is equivalent to solving
	\begin{align}\label{eq:cpc_sigma_loss_copy}
		\max_{\bm{B} \in \mathcal{V}_p(\mathbb{R}^p)} \sum_{j=1}^p \left( \bm{b}_j^\intercal \bm{\Sigma} \bm{b}_j \right)^2,
	\end{align}
	for \(\bm{B} = [\bm{b}_1, \ldots, \bm{b}_p] \in \mathcal{V}_p(\mathbb{R}^p)\) be an orthonormal matrix. Then the objective in \eqref{eq:cpc_sigma_loss_copy} can be written as
	\begin{align}
		\sum_{j=1}^p \left( \bm{b}_j^\intercal \bm{\Sigma} \bm{b}_j \right)^2
		= \left\Vert \text{diag}\left( \bm{B}^\intercal \bm{\Sigma} \bm{B} \right) \right\Vert_F^2
		\le \left\Vert \bm{B}^\intercal \bm{\Sigma} \bm{B} \right\Vert_F^2
		= \left\Vert \bm{\Sigma} \right\Vert_F^2,
	\end{align}
	with equality if and only if \( \bm{B}^\intercal \bm{\Sigma} \bm{B} \) is diagonal. This condition holds precisely when each \( \bm{b}_j \) is an eigenvector of \( \bm{\Sigma} \) with eigenvalue \( \lambda_j \), for \( j = 1, \ldots, p \). Therefore, the objective is maximized when \( \bm{b}_1, \ldots, \bm{b}_p \) are the eigenvectors of \( \bm{\Sigma} \). In particular, the CPC directions \( \bm{B} = [\bm{b}_1, \ldots, \bm{b}_p] \) of \( \bm{\Psi} \) coincide with the eigenvectors of the matrix \( \bm{\Sigma} \), as desired.

\end{proof}

\begin{corollary}[CPC for Wishart Distribution]\label{prop:wishart}
	Assume \( \bm{\Psi} \sim \mathcal{W}_p(n, \bm{\Sigma}) \), and let \( \bm{B} \) denote the common principal components (CPC) of the random matrix \( \bm{\Psi} \). Then, the CPC corresponds to the eigenvectors of the scale matrix \( \bm{\Sigma} \).
\end{corollary}
\begin{proof}
	Assume \( \bm{\Psi} \sim \mathcal{W}_p(n, \bm{\Sigma}) \). By the definition of the Wishart distribution, there exist i.i.d. random vectors \( \bm{x}_1, \ldots, \bm{x}_n \sim \mathcal{N}_p(\bm{0}, \bm{\Sigma}) \) such that
	\[
		\bm{\Psi} = \sum_{i=1}^n \bm{x}_i \bm{x}_i^\intercal.
	\]
	Let \( \bm{b}_j \in \mathbb{R}^p \) be any unit vector. Consider the scalar random variable
	\[
		\bm{b}_j^\intercal \bm{\Psi} \bm{b}_j = \sum_{i=1}^n \bm{b}_j^\intercal \bm{x}_i \bm{x}_i^\intercal \bm{b}_j = \sum_{i=1}^n (\bm{b}_j^\intercal \bm{x}_i)^2.
	\]
	Note that for each \( i \), the scalar \( \bm{b}_j^\intercal \bm{x}_i \sim \mathcal{N}(0, \bm{b}_j^\intercal \bm{\Sigma} \bm{b}_j) \), i.e., it is a Gaussian with zero mean and variance \( \bm{b}_j^\intercal \bm{\Sigma} \bm{b}_j \). Therefore,
	\[
		\frac{\bm{b}_j^\intercal \bm{x}_i}{\sqrt{\bm{b}_j^\intercal \bm{\Sigma} \bm{b}_j}} \sim \mathcal{N}(0,1),
	\]
	and hence
	\[
		\frac{\bm{b}_j^\intercal \bm{\Psi} \bm{b}_j}{\bm{b}_j^\intercal \bm{\Sigma} \bm{b}_j}
		= \sum_{i=1}^n \left( \frac{\bm{b}_j^\intercal \bm{x}_i}{\sqrt{\bm{b}_j^\intercal \bm{\Sigma} \bm{b}_j}} \right)^2
		\sim \chi^2_n.
	\]
	This implies that
	\[
		\bm{b}_j^\intercal \bm{\Psi} \bm{b}_j \sim (\bm{b}_j^\intercal \bm{\Sigma} \bm{b}_j) \cdot \chi^2_n.
	\]
	Now consider the objective:
	\[
		\sum_{j=1}^p \mathbb{E} \left[ \left( \bm{b}_j^\intercal \bm{\Psi} \bm{b}_j \right)^2 \right].
	\]
	Using the result that the second moment of a chi-squared random variable with \( n \) degrees of freedom is
	\[
		\mathbb{E}\left[ (\chi^2_n)^2 \right] = n(n + 2),
	\]
	it follows that
	\[
		\mathbb{E} \left[ \left( \bm{b}_j^\intercal \bm{\Psi} \bm{b}_j \right)^2 \right]
		= (\bm{b}_j^\intercal \bm{\Sigma} \bm{b}_j)^2 \cdot \mathbb{E}[(\chi^2_n)^2]
		= (\bm{b}_j^\intercal \bm{\Sigma} \bm{b}_j)^2 \cdot n(n+2).
	\]
	Therefore,
	\begin{align}\label{eq:cpc_sigma_loss}
		\sum_{j=1}^p \mathbb{E} \left[ \left( \bm{b}_j^\intercal \bm{\Psi} \bm{b}_j \right)^2 \right]
		= n(n+2) \sum_{j=1}^p (\bm{b}_j^\intercal \bm{\Sigma} \bm{b}_j)^2.
	\end{align}
	The rest of the proof follows from \Cref{prop:cpc_constant_kurtosis}, showing that maximizing \cref{eq:cpc_sigma_loss} under the orthonormality constraint on \( \bm{b}_j \) for \( j = 1, \ldots, p \) results in the eigenvectors of \( \bm{\Sigma} \).
\end{proof}

\begin{remark}\label{sec4:remark}
	By invoking \Cref{prop:cpc_constant_kurtosis} and \Cref{prop:wishart}, we can simplify the process of obtaining a set of K orthogonal matrices associated with each cluster under the assumption of constant kurtosis or Wishart distribution. This process is a crucial part of our clustering algorithm, where each cluster is characterized by its own distinct orthogonal matrix, effectively implementing a common principal component analysis approach. The result of this simplification is a more streamlined algorithm, which we refer to as Fast $K$-Tensors, detailed in \Cref{fastKTensor}. This version of the algorithm incorporates an efficient method for estimating common principal components, thereby enhancing the overall performance and utility of the clustering procedure.
\end{remark}

\begin{algorithm}[htbp!]\label{sec4:algo2}
	Initialize clusters $\{\mathcal D_\k\}_{\k = 1}^K$ ({e.g., }by randomly partitioning the data $\{\bpsi_1, \dots, \bpsi_n\}$ )\\
	\textbf{Iterate} until the cluster assignment $\{\mathcal D_\k\}_{\k = 1}^K$ doesn't change:
	\begin{enumerate}
		\item Estimate common principal components for each cluster and update $\{\mathbf B_{\k}\}_{k=1}^K$ by computing the eigenvectors of the sample covariance matrix for each cluster.
		\item Obtain new cluster assignments for each observation by assigning each matrix $\bpsi_i$ to the cluster whose projection $\mathcal P_{\mathbf B_k}(\bpsi_i)$ minimizes the Frobenius norm $\Vert \bpsi_i - \mathcal P_{\mathbf B_k}(\bpsi_i) \Vert_F^2$, and update $\{\mathcal D_\k\}_{\k = 1}^K$ accordingly (\cref{sec3: assignments}).
	\end{enumerate}
	\caption{Fast $K$-Tensors: Clustering Positive Semi-Definite Matrices}\label{fastKTensor}
\end{algorithm}

\section{{Self-Consistency} Property}
This section proves that, under the Wishart distribution assumption, the principal SPSD tensors satisfy the self-consistency property \ref{sec3:def1}. Consequently, the Fast $K$-Tensors algorithm constitutes a self-consistent clustering {algorithm}.

\begin{theorem}[Principal and Self-Consistent SPSD Tensors]\label{thm:self-consistency_spsd_tensors}
	Assume \( \bm{\Psi} \sim \mathcal{W}_p(v, \bm{\Sigma}) \) is a Wishart-distributed random matrix. Let \( \bm{B} \) denote the matrix of eigenvectors of \( \bm{\Sigma} \). Then:
	\begin{align}
		\mathbb{E}[\bm{\Psi} \mid \mathcal{B} = \bm{B},\; \bm{\Lambda}_{\bm{B}}(\bm{\Psi}) = \bm{\Lambda}] = \bm{B} \bm{\Lambda} \bm{B}^\intercal.
	\end{align}
\end{theorem}
\begin{proof}
	Let \(\bm\Psi \sim \mathcal{W}_p(v, \bm\Sigma)\), and \(\bm B\) be the matrix of eigenvectors for \(\bm \Sigma\). By properties of the Wishart distribution, the following results hold:

	\textbf{Statement 1:} The matrix \(\bm{B}^\intercal \bm{\Psi} \bm{B}\) is also Wishart distributed:
	\[
		\bm{B}^\intercal \bm{\Psi} \bm{B} \sim \mathcal{W}_p(v, \bm{D}),
	\]
	where \(\bm{D}\) is the diagonal matrix of eigenvalues of \(\bm{\Sigma}\).

	\textbf{Statement 2:} Let \(\Psi_{jj}\) and \(\Psi_{jh}\) denote the \(j\)-th diagonal entry and the \((j,h)\)-th off-diagonal entry of \(\bm{\Psi}\), respectively. Then, the \(2 \times 2\) principal submatrix
	\[
		\tilde{\bm{\Psi}}_{jh} =
		\begin{bmatrix}
			\Psi_{jj} & \Psi_{jh} \\
			\Psi_{jh} & \Psi_{hh}
		\end{bmatrix}
		\sim \mathcal{W}_2(v, \tilde{\bm{\Sigma}}_{jh}),
		\quad \text{where } \tilde{\bm{\Sigma}}_{jh} =
		\begin{bmatrix}
			\Sigma_{jj} & \Sigma_{jh} \\
			\Sigma_{jh} & \Sigma_{hh}
		\end{bmatrix},
	\]
	also follows a Wishart distribution.

	The proof begins by considering the special case where \(\bm \Phi \sim \mathcal{W}_p(v, \bm{D})\) with \(\bm{D}\) diagonal. Then the principal submatrix
	\[
		\tilde{\bm{\Phi}}_{jh} \sim \mathcal{W}_2(v, \tilde{\bm{D}}_{jh}),
		\quad \text{with } \ \tilde{\bm{D}}_{jh} = \begin{bmatrix} d_j & 0 \\ 0 & d_h \end{bmatrix},
	\]
	where \(d_j\) and \(d_h\) are the \(j\)-th and \(h\)-th diagonal entries of \(\bm{D}\).

	The joint density of \(\Phi_{jj}, \Phi_{jh}, \Phi_{hh}\) follows directly from the distribution of the \(2 \times 2\) principal submatrix \(\tilde{\bm{\Phi}}_{jh}\), as stated in Statement 2, and is given by:
	\begin{align*}
		f_{\Phi_{jj}, \Phi_{jh}, \Phi_{hh}}(\phi_{jj}, \phi_{jh}, \phi_{hh})
		= \frac{
			(\phi_{jj} \phi_{hh} - \phi_{jh}^2)^{\frac{v - 3}{2}}
		}{
			2^v (d_j d_h)^{v/2} \Gamma_2\left(\frac{v}{2}\right)
		}
		\exp\left(
		-\frac{1}{2} \left(\frac{\phi_{jj}}{d_j} + \frac{\phi_{hh}}{d_h}\right)
		\right).
	\end{align*}
	To obtain the joint density of \(\Phi_{jj}\) and \(\Phi_{hh}\), we integrate out \(\Phi_{jh}\) over its support, denoted as \(\operatorname{supp}(\Phi_{jh})\), yielding:
	\begin{align}
		f_{\Phi_{jj}, \Phi_{hh}}(\phi_{jj}, \phi_{hh})
		= \frac{
			\exp\left(-\frac{1}{2} \left(\frac{\phi_{jj}}{d_j} + \frac{\phi_{hh}}{d_h}\right)\right)
		}{
			2^v (d_j d_h)^{v/2} \Gamma_2\left(\frac{v}{2}\right)
		}
		\mathlarger\int_{\operatorname{supp}(\Phi_{jh})} (\phi_{jj} \phi_{hh} - \phi_{jh}^2)^{\frac{v - 3}{2}} \, d\phi_{jh},
	\end{align}
	and conditional density of \(\Phi_{jh} \mid \Phi_{jj}, \Phi_{hh}\) is given by:
	\begin{align}\label{eq:conditional_pdf}
		f_{\Phi_{jh} \mid \Phi_{jj}, \Phi_{hh}}(\phi_{jh} \mid \phi_{jj}, \phi_{hh})
		\propto (\phi_{jj} \phi_{hh} - \phi_{jh}^2)^{\frac{v - 3}{2}}.
	\end{align}
	This conditional density is symmetric around 0 since
	\begin{align}
		f_{\Phi_{jh} \mid \Phi_{jj}, \Phi_{hh}}(-\phi_{jh} \mid \phi_{jj}, \phi_{hh}) = f_{\Phi_{jh} \mid \Phi_{jj}, \Phi_{hh}}(\phi_{jh} \mid \phi_{jj}, \phi_{hh}).
	\end{align}
	Given that \cref{eq:conditional_pdf} defines a probability density function, this symmetry implies the support of this conditional variable, denoted as \(\operatorname{supp}(\Phi_{jh} \mid \Phi_{jj}, \Phi_{hh})\), is also symmetric around 0. Therefore,
	\begin{align}
		\mathbb{E}[\Phi_{jh} \mid \Phi_{jj}, \Phi_{hh}]
		= \int_{\operatorname{supp}(\Phi_{jh} \vert \Phi_{jj}, \Phi_{hh})} \phi_{jh} \, f_{\Phi_{jh} \mid \Phi_{jj}, \Phi_{hh}}(\phi_{jh} \mid \phi_{jj}, \phi_{hh}) \, d\phi_{jh} = 0.
	\end{align}
	Since this holds for all \(\phi_{jj}, \phi_{hh} \ge 0\), it follows that
	\[
		\mathbb{E}[\Phi_{jh} \mid \Phi_{jj} = \lambda_{jj}, \Phi_{hh} = \lambda_{hh}] = 0 \quad \text{a.s.}.
	\]

	Now, recall that the Wishart distribution with diagonal scale matrix \(\bm{D}\) can be constructed from:
	\[
		\bm{\Phi} = \sum_{i=1}^v \bm{x}_i \bm{x}_i^\intercal,
		\quad \text{where } \bm{x}_i \sim \mathcal{N}_p(\bm{0}, \bm{D}) \text{ with independent components}.
	\]
	This implies
	\[
		\Phi_{jh} = \sum_{i=1}^v x_{ij} x_{ih}, \quad \Phi_{\ell\ell} = \sum_{i=1}^v x_{i\ell}^2.
	\]
	Thus, for \(j \neq h \neq \ell\), the independence of components implies \(x_{ij} \perp x_{ih} \perp x_{i\ell}\), which gives \(\Phi_{jh} \perp \Phi_{\ell\ell}\). Therefore,
	\begin{align}
		\mathbb{E}[\Phi_{jh} \vert \Phi_{11} = \lambda_{11}, \ldots, \Phi_{pp} = \lambda_{pp}] = 0,
	\end{align}
	leading to the matrix form:
	\[
		\mathbb{E}[\bm{\Phi} \mid \operatorname{diag}(\bm{\Phi}) = \bm{\Lambda}] = \bm{\Lambda},
	\]
	for any diagonal matrix \(\bm{\Lambda} = \operatorname{diag}(\lambda_{11}, \dots, \lambda_{pp})\) with \(\lambda_{jj} \ge 0\).

	Finally, applying Statement 1, define \(\bm{\Psi} = \bm{B} \bm{\Phi} \bm{B}^\intercal\). Then \(\bm{\Psi} \sim \mathcal{W}_p(v, \bm{B}^\intercal \bm{D} \bm{B})\), and \(\bm{B}\) corresponds to the eigenvectors of \(\bm{B} \bm{D} \bm{B}^\intercal\). Therefore,
	\begin{align*}
		\mathbb{E}[\bm{\Psi} \mid \mathcal{B} = \bm{B}, \bm{\Lambda}_{\bm{B}}(\bm{\Psi}) =  \bm{\Lambda}]
		 & = \mathbb{E}[\bm B \bm{\Phi} \bm B^\intercal \mid \mathcal{B} = \bm{B}, \bm\Lambda_{\bm B}\left(\bm B \bm{\Phi} \bm B^\intercal \right) =  \bm{\Lambda}]         \\
		 & = \mathbb{E}[\bm B \bm{\Phi} \bm B^\intercal \mid \mathcal{B} = \bm{B}, (\bm{B}^\intercal \bm{B} \bm{\Phi} \bm{B}^\intercal \bm{B}) \odot \bm{I} = \bm{\Lambda}] \\
		 & = \bm{B} \mathbb{E}[\bm{\Phi} \mid \operatorname{diag}(\bm{\Phi}) = \bm{\Lambda}] \bm{B}^\intercal                                                               \\
		 & = \bm{B} \mathbb{E}[\bm{\Phi} \mid \bm{\Lambda}_{\bm{B}}(\bm{\Psi}) = \bm{\Lambda}] \bm{B}^\intercal                                                             \\
		 & = \bm{B} \bm{\Lambda} \bm{B}^\intercal.
	\end{align*}
	This verifies the self-consistency condition stated in \cref{eq:pointwise_sc}.
\end{proof}

\begin{theorem}\label{sec4:prop3}
	Let $\mathcal P_{\mathcal B_j}, (j = 1,2,\ldots)$ denote the principal SPSD tensors from successive iterations of the self-consistency algorithm $K$-Tensors (\Cref{sec4:algo2}) for a random SPSD matrix $\bm\Psi \in \S_+^p$. Then $\E\Vert \bm \Psi - \mathcal P_{\mathcal B_{j}}(\bm\Psi) \Vert_{\F}^2$ is monotonically decreasing in $j$.
\end{theorem}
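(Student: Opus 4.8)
The plan is to establish the standard two-step descent property shared by all Lloyd-type ($\K$-means) algorithms, here adapted to the Frobenius-norm loss on $\S_+^p$ via the projections $\mathcal P_{\B}$. Write $\B_1^{(j)},\ldots,\B_\K^{(j)}$ for the orthonormal matrices and $\mathcal D_1^{(j)},\ldots,\mathcal D_\K^{(j)}$ for the partition produced at the end of iteration $j$, and set
$$L_j := \E\big\Vert \bPsi - \mathcal P_{\mathcal B_j}(\bPsi)\big\Vert_\F^2 = \E\Big[\min_{1\le \k\le \K}\big\Vert \bPsi - \mathcal P_{\B_\k^{(j)}}(\bPsi)\big\Vert_\F^2\Big],$$
where the second equality uses that the reassignment step of iteration $j$ places each $\bPsi$ in the domain of attraction whose projection is closest, so that $\mathcal P_{\mathcal B_j}(\bPsi)$ realizes the minimal distance $\mathbf d(\bPsi\mid\cdot)$ of Definition (the induced partition is exactly the closest-projection partition for $\{\B_\k^{(j)}\}$). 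Since $L_j\ge 0$, it suffices to show $L_{j+1}\le L_j$, which I would do by factoring the passage from iteration $j$ to $j+1$ into two minimizations, each holding one block of variables fixed.

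\textbf{Step one ($\B$-update).} Holding the partition $\mathcal D_\k^{(j)}$ fixed, each $\B_\k^{(j+1)}$ is chosen to minimize the within-cluster objective $\E\{\Vert\bPsi - \mathcal P_{\B}(\bPsi)\Vert_\F^2 : \bPsi\in\mathcal D_\k^{(j)}\}$ over $\B\in\mathcal V_p(\Re^p)$. Because $\mathcal V_p(\Re^p)$ is compact and the objective is continuous in $\B$ (by Theorem \ref{sec3:thm1} the inner minimization over $\bLambda$ is attained and given in closed form), a minimizer exists, and crucially the previous matrix $\B_\k^{(j)}$ is a feasible competitor. Hence, writing $\tilde L := \E[\sum_\k \mathbbm 1\{\bPsi\in\mathcal D_\k^{(j)}\}\Vert\bPsi - \mathcal P_{\B_\k^{(j+1)}}(\bPsi)\Vert_\F^2]$, we obtain $\tilde L\le L_j$.

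\textbf{Step two (reassignment).} Holding $\{\B_\k^{(j+1)}\}$ fixed, each $\bPsi$ is moved into the new domain of attraction $\mathcal D_\k^{(j+1)}$ minimizing its projected distance, so pointwise $\min_\k\Vert\bPsi - \mathcal P_{\B_\k^{(j+1)}}(\bPsi)\Vert_\F^2\le \sum_\k \mathbbm 1\{\bPsi\in\mathcal D_\k^{(j)}\}\Vert\bPsi - \mathcal P_{\B_\k^{(j+1)}}(\bPsi)\Vert_\F^2$. Taking expectations gives $L_{j+1}\le\tilde L$, and chaining the two inequalities yields $L_{j+1}\le\tilde L\le L_j$. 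This is the claimed monotonicity, and combined with the lower bound $0$ it further shows $\{L_j\}$ converges.

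\textbf{Main obstacle.} The argument is conceptually routine, so the real work is bookkeeping: verifying that at the start of each iteration the current partition is exactly the one induced by the current $\B_\k$ (so the right-hand side of the first inequality equals $L_j$ and not a larger quantity), and that the previous iterate is admissible in each sub-minimization (so both inequalities point in the descent direction). A secondary technical point is attainment of the $\B$-minimizer and measurability of the assignment map when $\bPsi$ has a continuous distribution; compactness of $\mathcal V_p(\Re^p)$ settles existence, and the descent chain in fact needs only that each update does not \emph{increase} the within-cluster loss, so even an approximate or merely non-worsening $\B$-update suffices. I would note explicitly that the conclusion is monotone non-increase ("decreasing" in the weak sense), with strict decrease failing precisely once the algorithm reaches a fixed point.
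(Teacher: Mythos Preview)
Your argument is correct, but it follows a different route from the paper. The paper proves the monotonicity via the self-consistency framework: it first invokes the $L^2$ optimality of conditional expectation,
\[
\E\big\Vert \bPsi - \mathcal P_{\mathcal B_j}(\bPsi)\big\Vert_\F^2 \;\ge\; \E\big\Vert \bPsi - \E[\bPsi\mid \mathcal P_{\mathcal B_j}(\bPsi)]\big\Vert_\F^2,
\]
and then bounds the right-hand side below by the infimum over the next iterate $\mathcal B_{j+1}$. In other words, the paper treats the update abstractly as ``replace the current approximation by a self-consistent one, then re-optimize,'' which ties the result directly to the self-consistency principle of \citet{tarpey1999self} and Definition~\ref{sec3:def1}. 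Your proof instead decomposes iteration $j\to j+1$ into the two concrete algorithmic sub-steps (the $\B$-update with the partition held fixed, then the reassignment with $\{\B_\k\}$ held fixed) and observes that each sub-step is a minimization in which the previous iterate is feasible. This is the standard Lloyd/$\K$-means descent argument, transplanted to the projection loss on $\S_+^p$. What your approach buys is transparency: it tracks the actual steps of Algorithm~\ref{sec3:algo1}, makes explicit where compactness of $\mathcal V_p(\Re^p)$ and the closed form from Theorem~\ref{sec3:thm1} are used, and does not require introducing the intermediate conditional-expectation object $\tilde{\mathcal P}_{\mathcal B_j}$. The paper's route, on the other hand, emphasizes the conceptual link to self-consistency and would generalize more readily to variants of the algorithm built on that principle. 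Your observation that the result is monotone non-increase (with strict decrease failing exactly at a fixed point) matches the paper's own remark immediately following the proof.
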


\begin{proof}
	Let \(\tilde{\mathcal P}_{\mathcal B_{j}}(\bm\Psi)\) denote the principal and self-consistent tensors defined by the conditional expectation \(\E[\bPsi \vert {\mathcal P}_{\mathcal B_{j}}(\bm\Psi)]\) on the $j$-th iteration.
	\begin{align}
		\E \left[ \left\Vert \bm\Psi - \mathcal{P}_{\mathcal{B}_{j}}(\bm\Psi) \right\Vert_{\F}^2 \right]
		 & \ge \E \left[ \left\Vert \bm\Psi - \E\left[\bm\Psi \,\big|\, \bm \Lambda_{\mathcal{B}_j}(\bm\Psi) \right] \right\Vert_{\F}^2 \right] \label{convergence:eq1}              \\
		 & = \E \left[ \left\Vert \bm\Psi - \tilde{\mathcal{P}}_{\mathcal{B}_{j}}(\bm\Psi) \right\Vert_{\F}^2 \right] \notag                                                         \\
		 & \ge \E \left[ \inf_{\mathcal{B} \in \mathcal V_p(\mathbb R^p)} \left\Vert \bm\Psi - \mathcal{P}_{\mathcal{B}}(\bm\Psi) \right\Vert_{\F}^2 \right] \label{convergence:eq3} \\
		 & = \E \left[ \left\Vert \bm\Psi - \mathcal{P}_{\mathcal{B}_{j+1}}(\bm\Psi) \right\Vert_{\F}^2 \right]. \notag
	\end{align}
	Note that in \cref{convergence:eq1}, the inequality holds because the conditional expectation is the best approximation of $\bPsi$ given the domain of attraction defined by $\mathcal P_{\mathcal B_j}(\bPsi)$, as detailed in \cref{sec3:def:cpc}. The second inequality \cref{convergence:eq3} holds because the common principal components $\mathcal B_{j+1}$ minimize the expected distance to the SPSD matrix $\bPsi$ in the Frobenius norm under the domain of attraction defined by $\tilde{\mathcal P}_{\mathcal B_{j}}(\bm\Psi)$, as explained in \cref{sec3:domain}.
\end{proof}

\Cref{sec4:prop3} demonstrates that the algorithm monotonically reduces the loss function, and {thus} shows that the $K$-Tensors algorithm guarantees convergence to a local minimum. To augment the possibility of finding the global minimum, we employ a strategy used for $K$-means, iterating the algorithm for multiple initial starting values and choosing the solution that produces the minimum loss.

\section{Numerical Studies}

Our numerical studies are divided into two sections. The first section provides a visualization of SPSD matrices and their corresponding principal SPSD tensors, obtained from $K$-Tensors algorithm. The second section involves a comparative analysis of the $K$-Tensors algorithm against various distance metrics mentioned in the introduction section.

\subsection{Visualization}

For visualization, each SPSD matrix is represented as an ellipsoid in \(\mathbb{R}^p\), where the eigenvectors determine the orientation of the ellipsoid's axes, and the corresponding eigenvalues define the axis lengths (i.e., the eccentricities). The simulation setup is illustrated in \Cref{fig:simulation_psi}.
\begin{figure}[htbp!]
	\centering
	\begin{tikzpicture}[
			box/.style={rectangle, draw, rounded corners, align=center, minimum width=4.8cm, minimum height=1cm},
			subbox/.style={rectangle, draw, align=center, minimum width=3.8cm, minimum height=0.9cm},
			arrow/.style={->, thick},
			node distance=1.0cm and 1.5cm,
			font=\footnotesize
		]

		\node[subbox] (Bm) {Eigenvector Matrix \\ $\bm{B}_m =
				\begin{bmatrix}
					\cos(2\pi m / 7)  & \sin(2\pi m / 7) & 0 \\
					-\sin(2\pi m / 7) & \cos(2\pi m / 7) & 0 \\
					0                 & 0                & 1
				\end{bmatrix}$};

		\node[subbox, left=of Bm, xshift=-0.2cm] (Di) {Eigenvalues \\ $d_{i1} \sim \chi^2(10), \quad d_{i2} \sim \chi^2(3)$ \\
			$\bm{D}_i = \operatorname{diag}(d_{i1}, d_{i2}, 0)$};

		\node[subbox, right=of Bm, xshift=0.2cm] (Ei) {Noise Matrix \\ $\bm{E}_i \sim \mathcal{W}_3(\bm{I}, 10)$};

		\node[subbox, below=of Di, yshift=-0.2cm] (BDiBt) {Signal Part \\ $\bm{B}_m \bm{D}_i \bm{B}_m^\intercal$};

		\node[subbox, below=of Ei, yshift=-0.2cm] (BEiBt) {Noise Part \\ $\bm{B}_m \bm{E}_i \bm{B}_m^\intercal$};

		\node[box, below=of Bm, yshift=-3.8cm] (Psi) {Final SPSD Matrix \\
			$\bm{\Psi}_{mi} = \bm{B}_m (\bm{D}_i + \bm{E}_i) \bm{B}_m^\intercal$};

		\draw[arrow] (Bm.south) -- (Psi.north);
		\draw[arrow] (Di.south) -- (BDiBt.north);
		\draw[arrow] (Ei.south) -- (BEiBt.north);
		\draw[arrow] (BDiBt.south) -- (Psi.north west);
		\draw[arrow] (BEiBt.south) -- (Psi.north east);

		\node[above=0.4cm of Bm] {\textbf{For each group $m = 1, \dots, 3$}};

	\end{tikzpicture}
	\caption{Simulation of $\bm{\Psi}_{mi}$ using structured eigenvectors $\bm{B}_m$, random eigenvalues $\bm{D}_i$, and Wishart noise $\bm{E}_i$.}
	\label{fig:simulation_psi}
\end{figure}
\begin{figure}[htbp!]
	\centering     
	\begin{subfigure}[b]{0.49\linewidth}
		\centering
		\includegraphics[width=\linewidth]{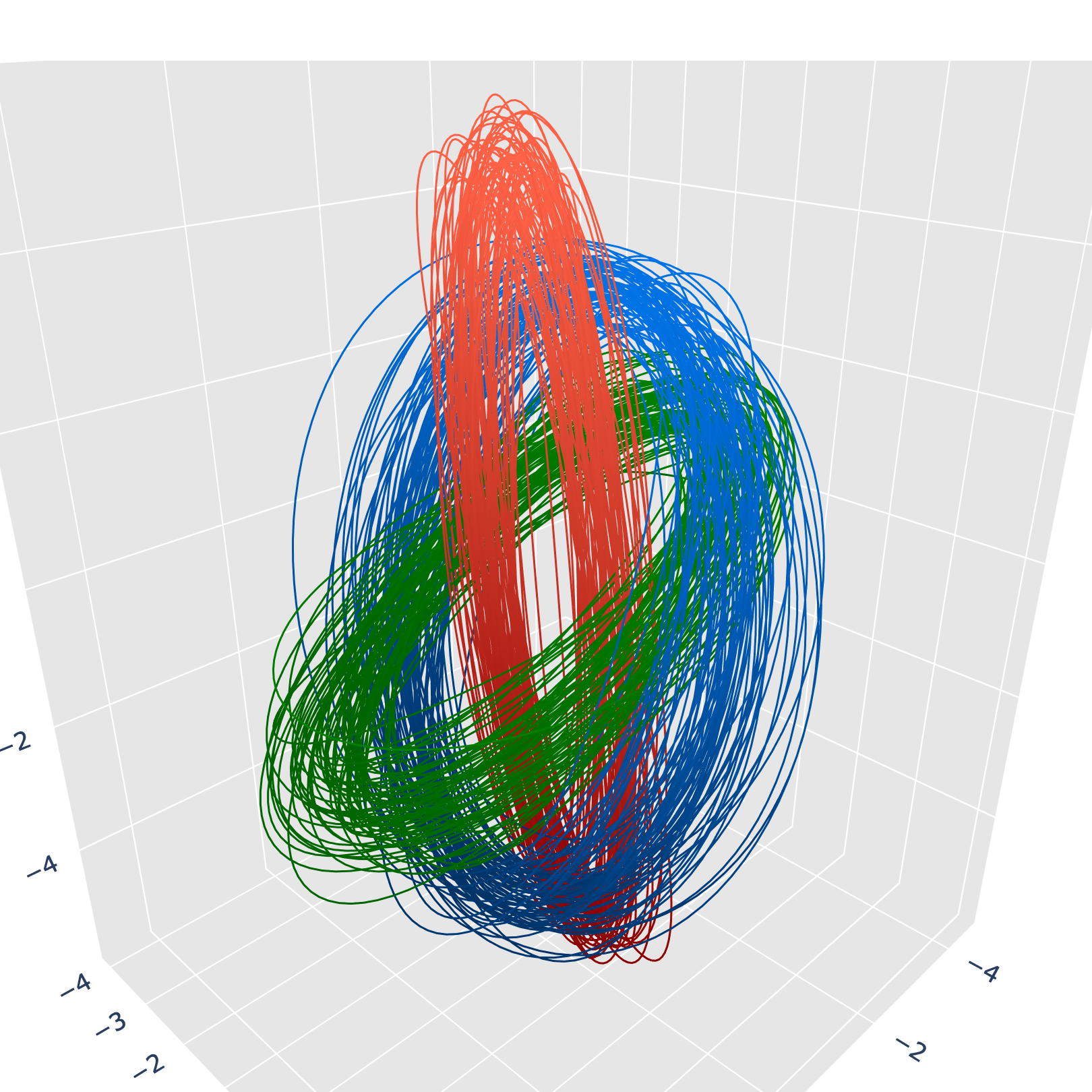}
		\caption{angel from the left side}
		\label{fig:2a}
	\end{subfigure}
	\begin{subfigure}[b]{0.49\linewidth}
		\centering
		\includegraphics[width=\linewidth]{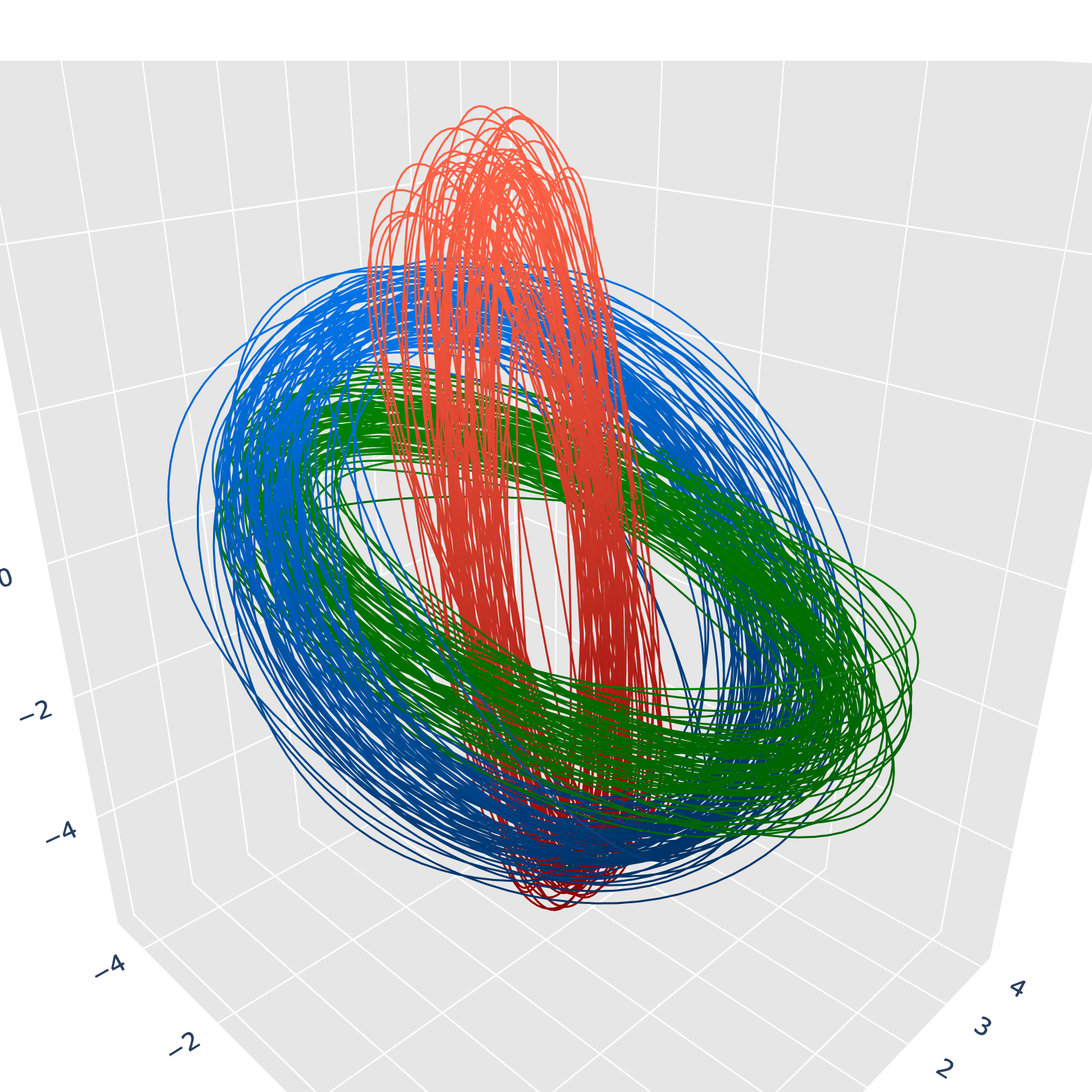}
		\caption{angel from the right side}
		\label{fig:2b}
	\end{subfigure}
	\begin{subfigure}[b]{0.49\linewidth}
		\centering
		\includegraphics[width=\linewidth]{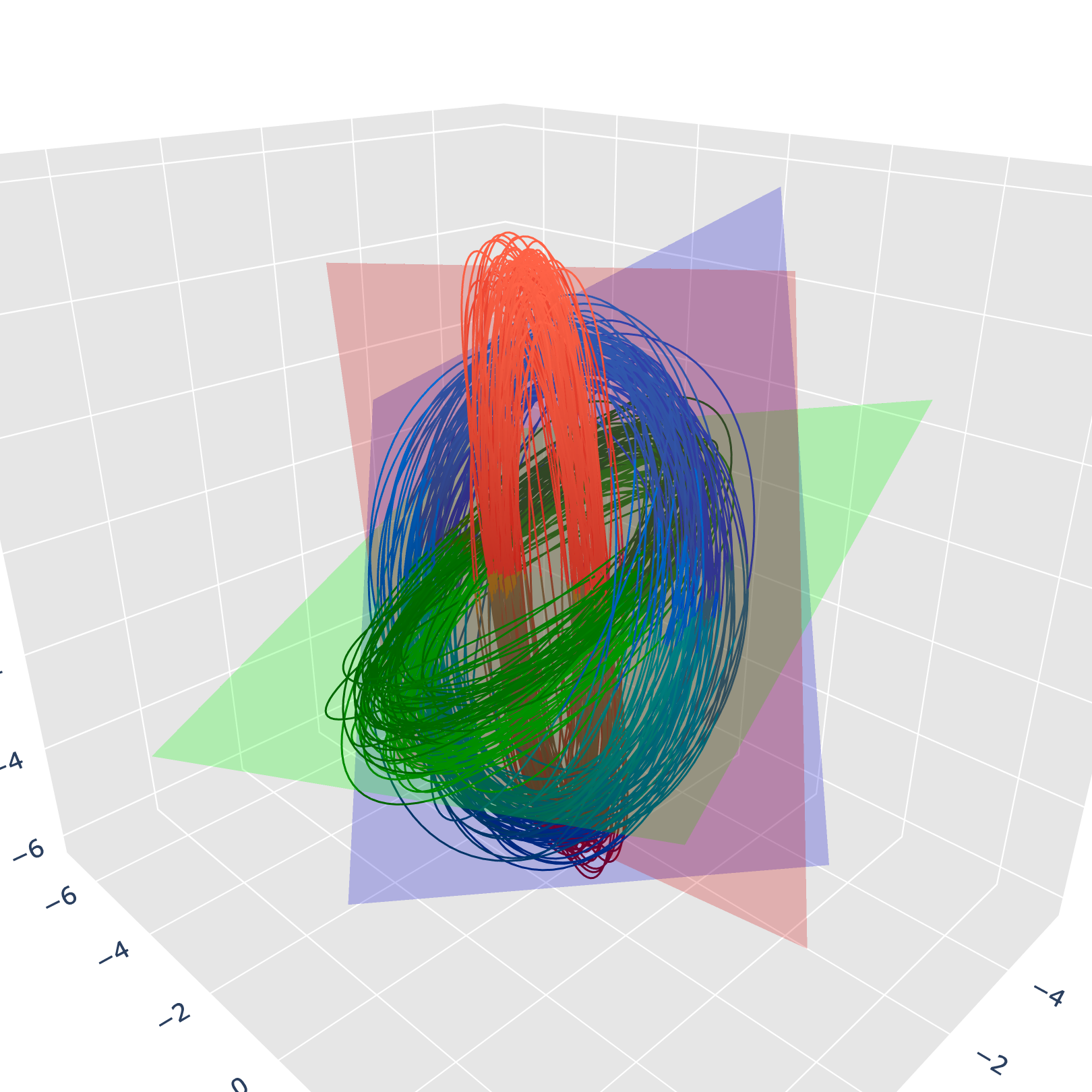}
		\caption{angel from the left side with plane}
		\label{fig:2c}
	\end{subfigure}
	\begin{subfigure}[b]{0.49\linewidth}
		\centering
		\includegraphics[width=\linewidth]{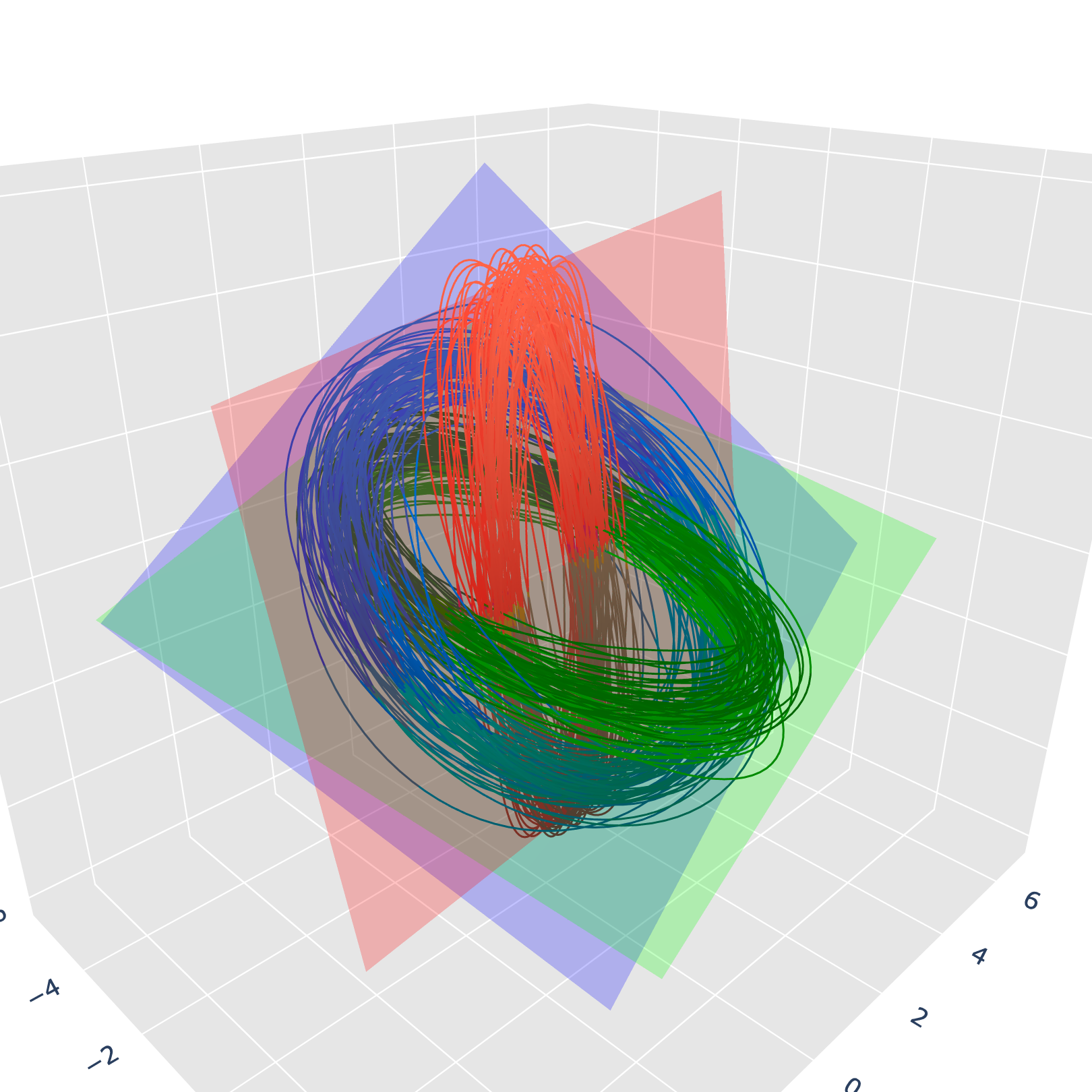}
		\caption{angel from the right side with plane}
		\label{fig:2d}
	\end{subfigure}
	\caption{Top row: Visualization of three groups represented as ellipsoids. Bottom row: Same visualization with principal SPSD tensors overlaid, illustrating the estimated subspaces for each group.}
	\label{fig2}
\end{figure}

\Cref{fig:2a} and \Cref{fig:2b} display the arrangement of ellipsoids in three-dimensional space from two different viewing angles, highlighting the spatial distribution of the simulated SPSD matrices. \Cref{fig:2c,fig:2d} present the same scene augmented with the principal SPSD tensors estimated via the $K$-Tensors algorithm. These tensors define the common principal component (CPC) subspaces for each group, visualized as color-matched planes centered within their respective clusters.

\subsection{Simulation-Based Evaluation}

\Cref{fig:simulate-cpc-tensor} provides a graphical illustration of the simulation process for each diagnostic group.
\begin{figure}[htbp]
	\centering
	\begin{tikzpicture}[
			box/.style={rectangle, draw, rounded corners, align=center, minimum width=4.5cm, minimum height=1cm},
			subbox/.style={rectangle, draw, align=center, minimum width=3.8cm, minimum height=0.9cm},
			arrow/.style={->, thick},
			node distance=1.0cm and 1.2cm,
			font=\footnotesize
		]

		\node[subbox] (Um) {Common Eigenvector Matrix $\bm{U}_m$ \\ $\bm{U}_m \in \mathbb{R}^{p \times p}$ (orthonormal)};
		\node[subbox, right=of Um] (chisq) {Eigenvalues \\ $d_j \sim \chi^2(\nu_j)$, sorted descending};

		\node[subbox, below=of chisq] (D) {Sample-specific \\ Diagonal Matrix $\bm{D}_{mi}$};
		\draw[arrow] (chisq.south) -- (D.north);

		\node[subbox, below=of Um] (Psi_clean) {Noise-free Tensor \\ $\bm{\Psi}^0_{mi} = \bm{U}_m \bm{D}_{mi} \bm{U}_m^\intercal$};
		\draw[arrow] (Um.south) -- (Psi_clean.north);
		\draw[arrow] (D.west) -- (Psi_clean.east);

		\node[box, below=1.8cm of Psi_clean] (Psi) {Final Tensor \\ $\bm{\Psi}_{mi} = \bm{\Psi}^0_{mi} + \bm{E}_{mi}$};
		\draw[arrow] (Psi_clean.south) -- (Psi.north);

		\node[subbox, right=2.0cm of Psi] (E) {Noise Matrix \\ $\bm{E}_{mi} \sim \text{Wishart}(p, \bm{I})$};
		\draw[arrow] (E.west) -- (Psi.east);

		\node[above=0.4cm of Um] {\textbf{For each generating group $m = 1, \dots, M$}};

	\end{tikzpicture}
	\caption{Graphical illustration of the simulation process for symmetric positive semi-definite (SPSD) tensors under a common principal component (CPC) model. Each generating group $m$ shares a common orthonormal basis $\bm{U}_m$, with sample-specific eigenvalues drawn from a chi-square distribution. Noise-free tensors are constructed via eigen-decomposition and then perturbed by Wishart-distributed noise.}
	\label{fig:simulate-cpc-tensor}
\end{figure}
For each generating group \( m = 1, \dots, M \), an orthonormal matrix \(\bm{U}_m \in \mathcal{V}_p(\mathbb{R}^p)\) is drawn to represent the shared eigenvectors within the group. Each \(\bm{U}_m\) is obtained via QR decomposition from a standard normal matrix:
\[
	\bm{A}_{ij} \sim \mathcal{N}(0, 1), \quad \bm{A} = \bm{Q} \bm{R}, \quad \bm{U}_m = \bm{Q}.
\]

\textbf{Sampling eigenvalues from chi-square distributions.}
A global degrees-of-freedom vector \(\bm{d} \in \mathbb{R}^p\) is sampled uniformly from \([0, 10]\) and sorted in decreasing order. For each observation \(i = 1, \ldots, 500\) within group \(m\), eigenvalues are independently drawn as \(d_{mij} \sim \chi^2(d_j)\), and assembled into diagonal matrices \(\bm{D}_{mi} = \mathrm{diag}(d_{mi1}, \ldots, d_{mip})\).

\textbf{Constructing noise-free covariance matrices.}
Each noise-free covariance matrix is constructed using the CPC structure:
\[
	\bm{\Psi}_{mi}^{(0)} = \bm{U}_m \bm{D}_{mi} \bm{U}_m^\top.
\]

\textbf{Adding noise via Wishart perturbation.}
Independent noise matrices \(\bm{E}_{mi}\) are sampled from a Wishart distribution:
\[
	\bm{E}_{mi} \sim \mathcal{W}_p(p, \bm{I}_p),
\]
where \(s > 0\) controls the noise scale. The final observed SPSD matrix is given by
\[
	\bm{\Psi}_{mi} = \bm{\Psi}_{mi}^{(0)} + \bm{E}_{mi}.
\]

\textbf{Simulation and evaluation.}
For each combination of simulated group number \(M \in \{2, 3, 4\}\) and matrix dimension \(p \in \{2, 5, 10\}\), 50 independent simulation runs are conducted. In each run, 500 SPSD matrices are generated for each of the \(M\) groups, yielding a total of \(500 \times M\) observations. Clustering is performed using four methods: the proposed \(K\)-Tensors algorithm and three \(K\)-means variants based on different distance metrics—Euclidean, affine-invariant Riemannian, and log-determinant divergence. In all methods, the number of clusters specified to the algorithm is set to \(K = M\), matching the number of underlying generating groups. Clustering performance is evaluated by the proportion of misclassified observations relative to the true group labels.

\begin{figure}[htp]
	\footnotesize
	\centering
	\includegraphics[width=\linewidth]{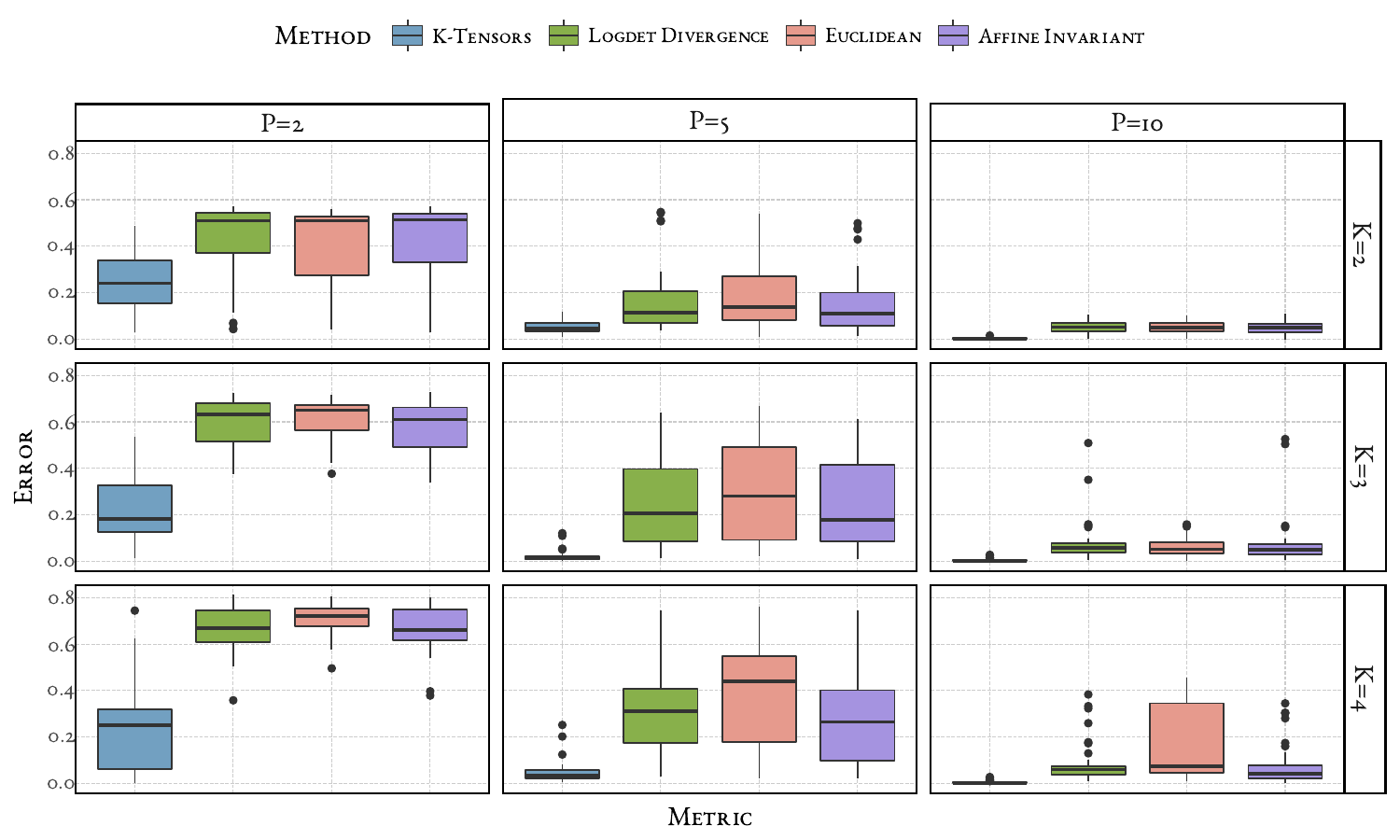}
	\caption{Boxplots of clustering error comparing $K$-Tensors with three $K$-means methods that use Euclidean, log-determinant, and affine-invariant distances, respectively. All results are based on the CPC tensor simulation setting.}
	\label{fig:ktensor-error-boxplot}
\end{figure}
\begin{table}[htp]
	\centering
	\footnotesize
	\begin{tabular}{c|cccc}
		\toprule
		\textbf{\(p\)} & \textbf{$K$-Tensors} & \textbf{Logdet}   & \textbf{Euclidean} & \textbf{Affine Inv.} \\
		\midrule
		\multicolumn{5}{c}{\textit{K = 2}}                                                                    \\
		\midrule
		2              & 0.25 (0.07, 0.47)    & 0.47 (0.27, 0.57) & 0.49 (0.30, 0.56)  & 0.47 (0.24, 0.56)    \\
		5              & 0.05 (0.02, 0.10)    & 0.15 (0.03, 0.50) & 0.18 (0.04, 0.49)  & 0.15 (0.04, 0.47)    \\
		10             & 0.00 (0.00, 0.01)    & 0.05 (0.01, 0.09) & 0.06 (0.01, 0.10)  & 0.05 (0.01, 0.08)    \\
		\midrule
		\multicolumn{5}{c}{\textit{K = 3}}                                                                    \\
		\midrule
		2              & 0.22 (0.07, 0.48)    & 0.58 (0.40, 0.71) & 0.62 (0.45, 0.72)  & 0.58 (0.38, 0.70)    \\
		5              & 0.02 (0.00, 0.06)    & 0.24 (0.04, 0.52) & 0.30 (0.05, 0.56)  & 0.23 (0.04, 0.53)    \\
		10             & 0.01 (0.00, 0.02)    & 0.07 (0.01, 0.15) & 0.07 (0.02, 0.14)  & 0.08 (0.01, 0.17)    \\
		\midrule
		\multicolumn{5}{c}{\textit{K = 4}}                                                                    \\
		\midrule
		2              & 0.23 (0.02, 0.61)    & 0.68 (0.54, 0.79) & 0.71 (0.61, 0.79)  & 0.66 (0.56, 0.78)    \\
		5              & 0.05 (0.01, 0.11)    & 0.32 (0.04, 0.62) & 0.40 (0.07, 0.67)  & 0.28 (0.05, 0.65)    \\
		10             & 0.01 (0.00, 0.02)    & 0.07 (0.01, 0.29) & 0.15 (0.02, 0.43)  & 0.07 (0.01, 0.30)    \\
		\bottomrule
	\end{tabular}
	\caption{Mean classification error with 5\% and 95\% quantiles for each method, grouped by number of clusters \(K\) and dimension~\(p\).}
	\label{tab:mean_error_metrics}
\end{table}
\Cref{tab:mean_error_metrics} and \Cref{fig:ktensor-error-boxplot} summarize the clustering errors across the four methods under each simulation setting. The $K$-Tensors algorithm consistently outperforms the other distance-based clustering approaches in terms of classification accuracy. Moreover, its runtime is comparable to that of standard $K$-means and substantially lower than methods based on affine-invariant Riemannian distance or log-determinant divergence, which require computationally expensive matrix inversions.

\section{fMRI Analysis Example}

The \textit{Human Connectome Project} (HCP) is a large-scale neuroimaging initiative funded by the National Institutes of Health (NIH) with the aim of mapping human brain connectivity using advanced imaging technologies and analytic methods \citep{VanEssen2013}. The HCP provides high-resolution structural and functional magnetic resonance imaging (MRI) data, along with extensive behavioral and demographic information, from a large cohort of healthy individuals. The project's flagship S1200 release includes imaging and behavioral measures from 1003 healthy young adults. Each participant completed four resting-state functional magnetic resonance imaging (rs-fMRI) runs, each approximately 15 minutes in duration with 1200 time points. The imaging data were preprocessed by the HCP using a standardized minimal preprocessing pipeline that includes spatial normalization, motion correction, and ICA-based denoising. In addition, HCP applied group-level spatial Independent Component Analysis (ICA) to define a data-driven parcellation into 15 components, referred to as Nodes 1--15. To account for temporal autocorrelation, signal thinning was performed based on effective sample size (ESS), and the resulting time series were used to compute subject-level covariance matrices for downstream analysis.

\subsection{Implementation Details}

The $K$-Tensors algorithm was applied to the 15x15 covariance matrices derived from rs-fMRI data. The algorithm was configured to run with a maximum of 1000 iterations and 10 random initializations, with the number of clusters set to 2. To evaluate the clustering results, we employed t-tests for continuous variables and Fisher's exact tests for categorical variables to identify significant differences between the two clusters. These differences highlight the biomedical relevance of the clustering results, demonstrating the effectiveness of our method in uncovering meaningful patterns in the data.


\subsection{Results}\label{sec5:results}
Using $K$-Tensors, the analysis revealed 148 factors with significant differences across multiple domains between the two clusters. Key findings, as summarized in \cref{tab: two-group}, span cognitive, emotional, motor, and neuroanatomical dimensions. These results are further contextualized with biomedical interpretations and supported by relevant literature to highlight their significance. \Cref{tab: two-group} summarizes the representative factors with significant differences between the two clusters, and \Cref{fig:cluster-dumbbell-rawmeans} provides a graphical illustration of the clustering results.
\begin{table}[htp]
	\footnotesize
	\begin{center}
		\setlength{\tabcolsep}{1.8mm}{
			\begin{tabular}{l c c c c r}
				\hline
				\hline
				factors                                  & mean 1   & mean 2   & t stat & p-value             & category                            \\
				\hline
				CardSort AgeAdj                          & 101.108  & 103.738  & -4.232 & $2.52\times10^{-5}$ & Cognitive Flexibility               \\
				Flanker AgeAdj                           & 101.319  & 102.692  & -2.174 & 0.030               & Executive Function                  \\
				ReadEng AgeAdj                           & 106.033  & 108.359  & -2.490 & 0.013               & Language Decoding                   \\
				PicVocab AgeAdj                          & 108.260  & 110.460  & -2.287 & 0.022               & Language Comprehension              \\
				ProcSpeed AgeAdj                         & 102.210  & 105.585  & -2.683 & 0.007               & Processing Speed                    \\
				CogFluidComp AgeAdj                      & 104.364  & 107.740  & -3.116 & 0.002               & Cognition Fluid Composite           \\
				CogTotalComp AgeAdj                      & 111.733  & 115.968  & -3.297 & 0.001               & Cognition Total Composite Score     \\
				CogEarlyComp AgeAdj                      & 105.726  & 108.579  & -2.853 & 0.005               & Cognition Early Childhood Composite \\
				ER40 CR                                  & 35.395   & 35.740   & -2.149 & 0.032               & Emotion Recognition                 \\
				FS L ChoroidPlexus Vol                   & 1150.817 & 1102.193 & 3.346  & 0.001               & Volume Segmentation                 \\
				FS R ChoroidPlexus Vol                   & 1295.551 & 1245.391 & 2.762  & 0.006               & Volume Segmentation                 \\
				FS L Parahippocampal Thck                & 2.697    & 2.741    & -2.609 & 0.009               & Surface Thickness                   \\
				FS R Parahippocampal Thck                & 2.671    & 2.721    & -3.684 & 0.0002              & Surface Thickness                   \\
				FS R Supramarginal Area                  & 3780.031 & 3693.4   & 2.461  & 0.014               & Surface Area                        \\
				FS R Insula Area                         & 2445.089 & 2399.899 & 2.425  & 0.0155              & Surface Area                        \\
				Emotion Task Acc                         & 97.101   & 97.850   & -3.274 & 0.001               & Emotion                             \\
				Emotion Task Median RT                   & 792.915  & 762.361  & 4.158  & 3.48E-05            & Emotion                             \\
				Language Task Story Avg Difficulty Level & 10.225   & 10.565   & -3.636 & 0.0003              & Language                            \\
				Language Task Median RT                  & 3589.825 & 3525.851 & 3.871  & 0.0001              & Language                            \\
				Relational Task Acc                      & 74.428   & 77.648   & -4.011 & 6.51E-05            & Relational                          \\
				Relational Task Median RT                & 1754.796 & 1704.865 & 2.540  & 0.011               & Relational                          \\
				Social Task Perc TOM                     & 48.643   & 49.855   & -2.240 & 0.025               & Social                              \\
				Social Task Median RT TOM                & 1047.898 & 993.933  & 2.907  & 0.004               & Social                              \\
				WM Task Acc                              & 85.627   & 88.055   & -4.199 & 2.91E-05            & Working Memory                      \\
				WM Task Median RT                        & 872.119  & 850.340  & 2.778  & 0.006               & Working Memory                      \\
				Endurance AgeAdj                         & 106.806  & 109.318  & -2.850 & 0.004               & Motor Endurance                     \\
				Dexterity AgeAdj                         & 99.165   & 101.229  & -3.312 & 0.001               & Motor Dexterity                     \\
				\hline
				\hline
			\end{tabular}}
		\caption{representative factors with significant difference in two clusters}
		\label{tab: two-group}
	\end{center}
\end{table}

\begin{figure}[htp]
	\centering
	\includegraphics[width = \textwidth]{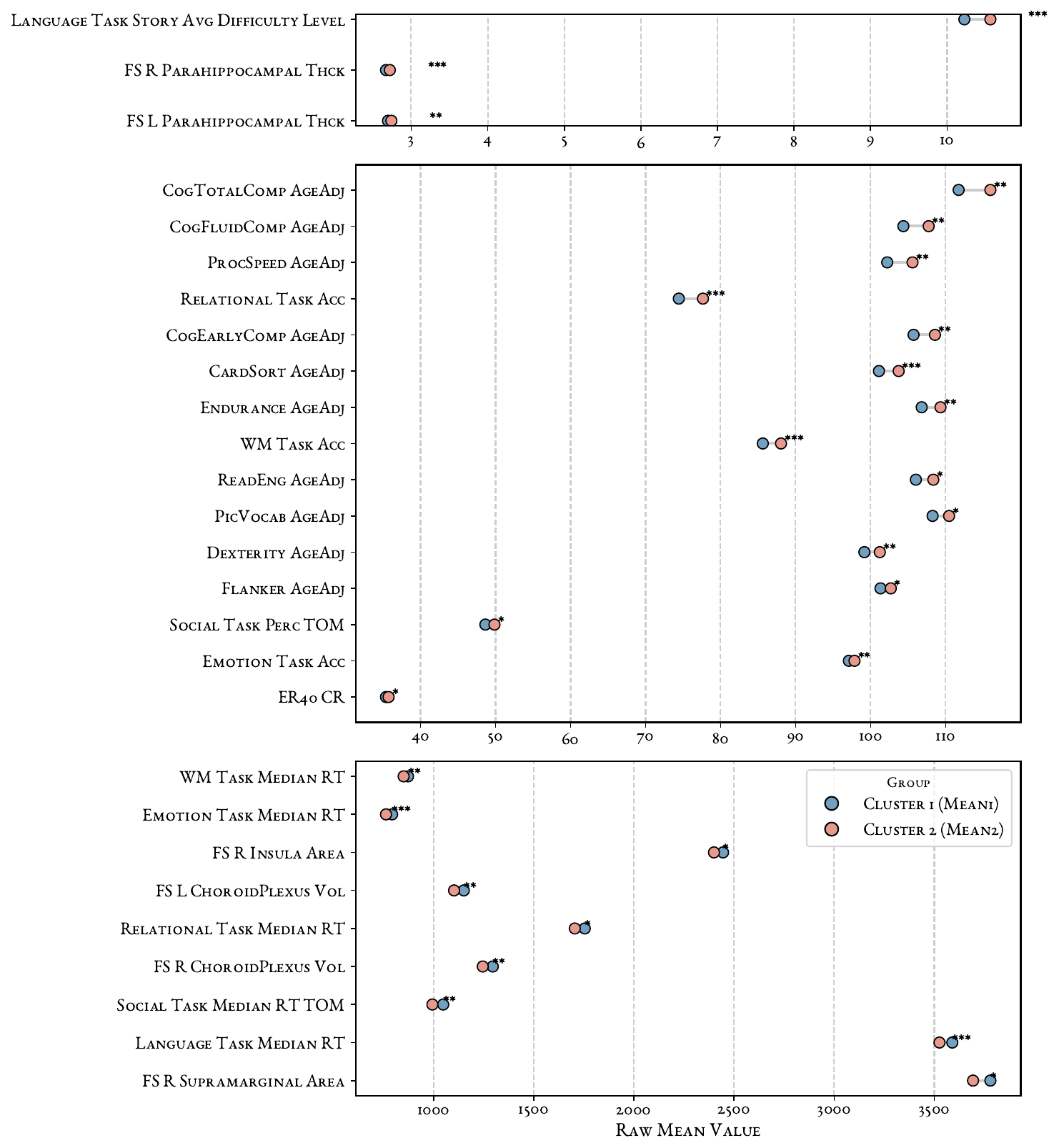}
	\caption{
		Dumbbell plots comparing raw mean values for representative factors with significant differences between the two clusters identified by $K$-Tensors.
		Stars denote significance: $^{*}p<0.05$, $^{**}p<0.01$, $^{***}p<0.001$.
	}
	\label{fig:cluster-dumbbell-rawmeans}
\end{figure}
In terms of cognitive performance, Cluster 2 demonstrated consistently higher scores than Cluster 1 across multiple cognitive domains, highlighting enhanced neuropsychological functioning. For instance, significant differences in Cognitive Flexibility (CardSort AgeAdj: $t=-4.232$, $p=2.52\times10^{-5}$) and Executive Function (Flanker AgeAdj: $t=-2.174$, $p=0.030$) suggest stronger prefrontal cortex activity in Cluster 2. Cognitive flexibility and executive function are heavily reliant on the dorsolateral prefrontal cortex, a region implicated in adaptive decision-making and inhibitory control \citep{miller2001integrative}. The superior Processing Speed (ProcSpeed AgeAdj: $t=-2.683$, $p=0.007$) and higher Fluid Cognition (CogFluidComp AgeAdj: $t=-3.116$, $p=0.002$) in Cluster 2 further reflect robust connectivity within large-scale networks, including the frontoparietal network. Such findings align with research demonstrating the role of the prefrontal and parietal cortices in supporting task-switching, working memory, and attentional control \citep{cole2013multi}.

For emotional and social task performance, {individuals in} Cluster 2 showed superior emotional processing capabilities, as indicated by improved accuracy in Emotion Recognition (ER40 CR: $t=-2.149$, $p=0.032$) and faster response times during emotional tasks (Emotion Task Median RT: $t=4.158$, $p=3.48\times10^{-5}$). Enhanced emotion recognition correlates with greater amygdala activity and its interaction with the ventromedial prefrontal cortex, regions critical for interpreting facial expressions and emotional cues \citep{adolphs2002neural}. In social cognition tasks, Cluster 2 displayed improved Theory of Mind (ToM) accuracy (Social Task Perc TOM: $t=-2.240$, $p=0.025$), with faster response times (Social Task Median RT TOM: $t=2.907$, $p=0.004$). These findings suggest superior mentalizing capabilities, potentially linked to better structural integrity or functional connectivity in the temporoparietal junction and medial prefrontal cortex, areas implicated in ToM processes \citep{saxe2013people}.

When it comes to motor function, {individuals in} Cluster 2 outperformed Cluster 1 in both Motor Endurance (Endurance AgeAdj: $t=-2.850$, $p=0.004$) and Dexterity (Dexterity AgeAdj: $t=-3.312$, $p=0.001$). These results align with findings that better motor performance correlates with increased gray matter volume in motor-related regions, such as the primary motor cortex and supplementary motor area \citep{grogan2012structural}. Additionally, motor endurance is often associated with cerebellar efficiency, supporting fine motor coordination and sustained physical activity \citep{buckner2013cerebellum}.

Additionally, {individuals in} Cluster 2 exhibited significant differences in neuroanatomical features. Reduced Choroid Plexus Volume (FS L: $t=3.346$, $p=0.001$; FS R: $t=2.762$, $p=0.006$) in Cluster 2 may reflect a lower inflammatory state, as increased choroid plexus volume is often linked to heightened neuroinflammatory responses, potentially contributing to cognitive decline \citep{baruch2013cns}. Increased Parahippocampal Thickness (FS L: $t=-2.609$, $p=0.009$; FS R: $t=-3.684$, $p=0.0002$) in Cluster 2 is noteworthy, as this region plays a crucial role in memory encoding and spatial navigation. Thicker parahippocampal cortices are associated with better memory performance and lower susceptibility to neurodegenerative diseases, such as Alzheimer's disease \citep{killiany2002mri}. Variations in cortical Surface Area were also observed, with Cluster 2 showing reduced Right Supramarginal Area (FS R: $t=2.461$, $p=0.014$) and Right Insula Area (FS R: $t=2.425$, $p=0.016$). While reduced surface area in these regions might indicate neural efficiency, it could also reflect developmental or experiential differences impacting social-emotional processing and interoception \citep{critchley2004neural}.

For task-specific performance, Cluster 2 achieved higher accuracy and faster response times in multiple task-based measures. For example, improved Working Memory Accuracy (WM Task Acc: $t=-4.199$, $p=2.91\times10^{-5}$) and faster Language Task Reaction Times (Language Task Median RT: $t=3.871$, $p=0.0001$) suggest optimized short-term memory storage and retrieval, likely mediated by the superior dorsolateral prefrontal cortex functioning. Enhanced performance in Relational Reasoning Tasks (Relational Task Median RT: $t=2.540$, $p=0.011$) underscores the efficiency of prefrontal-parietal connectivity, critical for complex reasoning and decision-making \citep{krawczyk2012cognition}.

These results provide compelling evidence of meaningful differences between the two clusters across cognitive, emotional, motor, and neuroanatomical domains. {individuals in} Cluster 2 exhibit enhanced functionality and structural integrity in areas critical for high-level processing and adaptive behavior. These findings not only validate the clustering approach used but also align with existing literature, underscoring the relevance of these biomarkers in understanding individual differences in cognitive and behavioral performance. Future research could explore the implications of these findings for disease susceptibility, treatment response, and personalized intervention strategies.

\section{Disscusion}
We emphasize the importance of our clustering algorithm development, especially its relevance to regression models employing SPSD matrix-valued observations. This link underscores the algorithm's role in improving analytical methods using SPSD matrices.

Recent developments in modeling SPSD matrices as outcomes in a regression framework have incorporated the concept of CPC. Specifically, \citet{zhao2021principal} and \citet{zhao2021covariate} have proposed novel approaches that account for SPSD matrix outcomes using a 1-dimensional quadratic form that assumes a shared subspace in the log-regression setting. Consider a regression setting where the outcomes are SPSD matrices, $\bm \psi_i = \text{cov}(\mathbf Y_i)$ for $i = 1,2,\ldots,n$, of a random vector $\mathbf Y_i \in \Re^p$, assuming $\mathbf Y_i$ is centered at 0 without loss of generality:
\begin{align*}
	\log\left\{ \left( \bm\gamma^\intercal \bm\psi_i \bm\gamma  \right) \right\} = \beta_0 + \bm x_i^\intercal \bm\beta_1
\end{align*}
In this heteroscedasticity model, the covariates collected from individual $i$ are denoted by $\bm x_i \in \Re^p$, where $\beta_0 \in \Re^1$ and $\bm \beta_1 \in \Re^{q-1}$ are the model parameters. The common subspace of SPSD matrices $\bm \psi_i$ is assumed to be spanned by a shared vector $\bm\gamma \in \Re^p$.

The magnitude of the direction spanned by the common orthonormal vector $\bm\gamma$ with respect to each covariance matrix is also a representation of the variance of the latent variable $\mathbf z_i = \mathbf Y_i \bm \gamma$, where $\text{cov}(\mathbf z_i) = \bm\gamma^\intercal \mathbf Y_i^\intercal \mathbf Y_i \bm\gamma = \bm\gamma^\intercal \bm \psi_i \bm\gamma$. This assumption enables the quadratic form $\bm\gamma^\intercal \bm\psi_i \bm\gamma$ to satisfy the heteroscedasticity model, with the covariance matrix for each observation being a SPSD matrix with a common subspace.

\citet{zhao2022longitudinal} extend this approach to account for longitudinally measured covariates for each subject, resulting in a more comprehensive and flexible method for analyzing complex data structures. However, these models, including tensor regression models, assume a common $\bm{\gamma}$ across all individuals. This assumption of a shared $\bm{\gamma}$ can obscure vital information within the eigenstructures, presenting an intriguing avenue for integrating \textit{principal positive semi-definite tensors} into regression models. This integration could potentially capture more nuanced relationships and variations among individual data points, enriching the model's analytical depth.

\citet{cook2008covariance} studied the sample covariance matrices of a random vector observed in different populations, assuming that the differences between the populations, aside from sample size, are solely due to the quadratic reduction $R(\bm\Psi) = \bm\alpha^\intercal \bm \Psi \bm\alpha$. Here, $\bm \alpha \in \Re^{p\times q}$ denotes a matrix with $q$ orthonormal columns. Subsequently, \citet{franks2019shared} extended this approach to the ``spiked covariance model,'' also known as the partial isotropy model, which was studied by \citet{mardia1979multivariate} and \citet{johnstone2001distribution}. This model is commonly used in multivariate statistical analysis and assumes that the covariance matrix of a high-dimensional random vector consists of a low-rank component (the ``spike'') and a high-rank component (the ``noise'').

Most statistical approaches have focused on data modalities producing outcomes in Euclidean spaces. However, many modern data modality methods produce samples of covariance matrices. The increasing degree to which matrix-valued data is being generated in applications has inspired our clustering strategy for positive semi-definite (SPSD) tensor data. Our clustering approach for SPSD matrices respects the underlying geometry of the data and opens up a pathway for further advances by incorporating sparsity and lower rank estimation into the subspace of the $K$-Tensors clustering centroids. Such enhancements aim to refine and optimize the clustering process, potentially improving the accuracy and interpretability of the resulting clusters.

\bibliography{references}

\end{document}